\definecolor{darkgreen}{rgb}{0, 0.5, 0}
\definecolor{red}{rgb}{1, 0, 0}
\journal{Applied Soft Computing}
\DeclareMathOperator*{\argmin}{argmin}
\newcommand\etal{\textit{et al.}}
\newcommand\ie{\textit{i.e.}}
\newcommand\eg{\textit{e.g.}}
\newcommand\st{\textit{s.t.}}
\newcommand\wrt{\textit{w.r.t.}}
\newcommand\etc{\textit{etc.}}
\newcommand\algoabbr{TEEA}
\newcommand\titletext{A Reference Vector based Many-Objective Evolutionary Algorithm with Feasibility-aware Adaptation}
\newtheorem{theorem}{Theorem}[section]
\newtheorem{definition}{Definition}[section]
\newtheorem{proof}{Proof}[section]
\begin{document}

\begin{frontmatter}
\title{\titletext{}}
\author[add1,add2, add3]{Mingde Zhao}
\author[add1,add4]{Hongwei Ge\corref{mycorrespondingauthor}}
\cortext[mycorrespondingauthor]{Corresponding author}
\ead{hwge@dlut.edu.cn}
\author[add1]{Kai Zhang}
\author[add1,add5]{Yaqing Hou}

\address[add1]{College of Computer Science and Technology, Dalian University of Technology, China}
\address[add2]{Montr\'eal Institute of Learning Algorithms (Mila), Canada}
\address[add3]{School of Computer Science, McGill University, Canada}
\address[add4]{Department of Computer Science and Engineering, Washington University in St. Louis, USA}
\address[add5]{School of Computer Science and Engineering, Nanyang Technological University, Singapore}

\begin{abstract}
The infeasible parts of the objective space in difficult many-objective optimization problems cause trouble for evolutionary algorithms. This paper proposes a reference vector based algorithm which uses two interacting engines to adapt the reference vectors and to evolve the population towards the true Pareto Front (PF) \st{} the reference vectors are always evenly distributed within the current PF to provide appropriate guidance for selection. The current PF is tracked by maintaining an archive of undominated individuals, and adaptation of reference vectors is conducted with the help of another archive that contains layers of reference vectors corresponding to different density. Experimental results show the expected characteristics and competitive performance of the proposed algorithm \algoabbr{}.
\end{abstract}

\begin{keyword}
many-objective optimization \sep reference vector \sep feasible objective space
\end{keyword}
\end{frontmatter}

\section{Introduction}
Many-objective Optimization Problems (MaOPs) are one of the biggest open-problems in applied soft computing. The complexities of the real-world problems give rise to the class of heuristic algorithms with population features, which are often recognized as Many-Objective Evolutionary Algorithms (MaOEAs) \cite{louafi2017multi, li2017quantum, ferreira2017design, du2018robust}.
\par
The difficulty of MaOPs increase dramatically with the increment of the dimensionality of the objective space, \ie{} the number of objectives  \cite{purshouse2007pareto, wang2017diversity}. The deterioration in performance inspires new heuristics. Recent literature reveals the trend of combining different strategies to achieve synergistic performance, since sticking to one direction alone leads to the overfitting on problems of certain types and less robustness.
\par
In this paper, we propose a reference vector based algorithm, yet hybridizing the ideas from Pareto dominance relations. The proposed algorithm \algoabbr{} focuses on the scenario of the objective space being haunted by infeasible parts that change the distribution of the PF while it is evolving, which is particularly meaningful for difficult problems where proximity of population to the true PF should not be expected. \algoabbr{} is with a selection engine and an adaptation engine, capable of adapting to the various characteristics of the Feasible Objective Space (FOS) in-time, which can be seen as a generalization of the adaptive reference vector based algorithms aiming to adapt the reference vectors for the true PF only.
\par
The rest of this paper is organized as follows: Section \ref{section:preliminaries} gives the preliminaries of researches on the MaOPs including the basics of reference vector based MaOPs, the focus and goal of this paper, the related works and then provides the ideas that yield the contribution of this work. Section \ref{section:framework} gives the details of the proposed algorithm \algoabbr{}. Section \ref{section:experiments} presents the experimental studies, which include the investigations of the characteristics and the comparative results with other state-of-the-art algorithms on a standard benchmark suite.

\section{Preliminaries}\label{section:preliminaries}
Reference vector based approaches are one of the most popular families of MaOEAs for their many attractive properties on problems with modest number of objectives (\eg{} $M \leq 8$): applicability onto the complicated scenarios \cite{chugh2018surrogate}, human preference articulation \cite{cheng2016reference} and high efficiency, \etc{}. In this section, we will first focus on giving the basic knowledge about the problem that we want to address in this paper, then discuss the related works, as well as our ideas of motivation and inspiration.
\subsection{Basics \& Challenge of Infeasibility}
The goal for MaOEAs is to find a solution set whose objective value vectors in the objective space constitute a ``good'' representative set of the true PF. To be a good representative set, the finite individuals\footnote{we overload the term ``individual'' to call one solution in the solution space and the corresponding objective vector in the objective space.} in the set should well present the spatial properties of the infinite true PF hypersurface. With such recognition, researchers summarize the goal of MaOEAs into obtaining proximity and diversity simultaneously, where proximity ensures that the individuals are close to the true PF and diversity ensures that the limited individuals can well represent the distribution of the true PF.
\par
Without the loss of generality, assuming that the goal of each objective is minimization and the objectives are strictly positive, here we introduce some basic knowledge about reference vector based MaOEAs, which is to be used in the later sections.
\begin{definition}[Feasibility]
Given a many-objective function $\bm{f}: \mathcal{S} \to \mathbb{R}^M_{+}$, where $\mathcal{S} \subset \mathbb{R}^D$ is the solution space defined for the problem (domain of $f$), $D$ is the dimensionality of search space and $M$ is the dimensionality of the objective space, the \textbf{feasible objective space} is defined as
$$\mathcal{O} \equiv \{\bm{o} | \bm{o} = \bm{f}(\bm{x}), \forall \bm{x} \in \mathcal{S}\} \subset \mathbb{R}^M_{+}$$
Also, an objective vector $\bm{o}$ is said to be \textbf{infeasible} if $\bm{o} \notin \mathcal{O}$.
\end{definition}

\begin{definition}[Pareto-dominate]
Given an MaOP and two individuals $\bm{a}, \bm{b} \in \mathbb{R}^M_{+}$, $\bm{a}$ is said to \textbf{Pareto-dominate} (\textbf{dominate}) $\bm{b}$ if and only if $\bm{a}$ is less than or equal to $\bm{b}$ element-wise and with at least one element strictly less than that of $\bm{b}$, \ie{}
$$\bm{a} < \bm{b}\text{ \textbf{\textit{iff}} }\forall i \in \{1, \dots, M\}, a_i \leq b_i\text{ and }\exists j \in \{1, \dots, M\}, a_j < b_j$$
\end{definition}

\begin{definition}[Pareto Front]
Given the many-objective function $\bm{f}: \mathcal{S} \to \mathcal{O}$ to be minimized and the set $P_+$ of individuals ever found in the optimization process, the \textbf{Pareto Front} (\textbf{PF}, \textbf{current PF}) $PF(P_+)$ is a subset of $P_+$ that dominates all other elements in $P_+$, \ie{}
$$\forall \bm{o} \in PF(P_+)\text{ and }\forall \bm{p} \in P_+ - PF(P_+), \bm{o} < \bm{p}$$
\end{definition}

People also overload the term ``current PF'' for the imaginary hypersurface that the current PF locates on.

\begin{definition}[True Pareto Front]
Given the many-objective function $\bm{f}: \mathcal{S} \to \mathcal{O}$ to be minimized, the \textbf{true Pareto Front} (\textbf{true PF}) $truePF(\bm{f})$ is a subset of $\mathcal{O}$ that dominates all other elements in $\mathcal{O}$, \ie{}
$$truePF(\bm{f}) \subset \mathcal{O}, \forall \bm{o}_{PF} \in truePF(\bm{f})\text{ and }\forall \bm{o} \in \mathcal{O} - truePF(\bm{f}), \bm{o}_{PF} < \bm{o}$$
\end{definition}

\begin{definition}[Infeasibility of FOS, Full FOS \& Partial FOS]
Given MaOP $\bm{f}: \mathcal{S} \to \mathcal{O}$ and $truePF(\bm{f})$, the \textbf{trivial infeasible part} $I_{trivial}$ of the objective space is defined as the set of points in $\mathbb{R}^M_+$ that could dominate at least one point in the true PF, \ie{}
$$I_{trivial} \equiv {\bm{o} | \exists \bm{o}_{PF} \in truePF(\bm{f}), \bm{o} < \bm{o}_{PF}}$$
Additionally, if $\mathbb{R}^M = I_{trivial} + \mathcal{O}$, the problem $\bm{f}$ is said to have a \textbf{full feasible objective space} (\textbf{full FOS}); Else, the problem is said to have non-trivial infeasible parts in the objective space or simply a \textbf{partial feasible objective space} (\textbf{partial FOS}).
\end{definition}

Problems with full FOS are also recognized as the simplex-like problems \cite{cheng2016reference, tian2017indicator}. A full FOS occupies the space of $\mathbb{R}^M_{+}$ except for the trivial infeasible part where points dominate the true PF. A partial FOS has more infeasible parts where points are dominated by the true PF.

\begin{definition}[Reference Vectors \& Reference Points]
Given the many-objective function $\bm{f}: \mathcal{S} \to \mathcal{O}$, a set $Z$ of \textbf{reference vectors} is a finite subset of $\mathbb{R}^M_{+}$.
\par
For any reference vector $\bm{z}$, the unit $L_1$-norm vector $\bm{\hat{z}} \equiv \bm{z} / \|\bm{z}\|_1$ is said to be the corresponding \textbf{reference point}.
\end{definition}

Reference vectors and reference points have injective correspondence. We call the scaled reference vectors reference points for they locate on the subspace of the unit hyperplane with $L_1$-norm $1$.

\begin{definition}[Activation of Reference Vector, Association \& Adjacent Space]
Given a set of reference vectors $Z \subset \mathbb{R}^M$ and an individual $\bm{o} in \mathcal{O}$, a reference vector $\bm{z} \in Z$ is said to be \textbf{activated} by $\bm{o}$ if
$$\angle{}(\bm{o}, \bm{z}) = \argmin_{\bm{z}^{'} \in Z}{\angle{}(\bm{o}, \bm{z}^{'})}$$
where $\angle{}(\bm{o}, \bm{z}) \equiv arccos(\frac{\bm{o}^T \bm{z}}{\|\bm{o}\|_2 \|\bm{z}\|_2})$ is the angle between $\bm{o}$ and $\bm{z}$.
Also, $\bm{o}$ is said to be attached or \textbf{associated} to $\bm{z}$ if $\bm{z}$ is activated by $\bm{o}$. The set of all the possible points that can activate $\bm{z}$ given $Z$ is recognized as the \textbf{adjacent space} of $\bm{z}$ given $Z$.
\end{definition}

If a reference vector is activated, there must be individuals nearby the direction the reference vector points towards. An MaOP with full FOS is expected to activate all reference vectors if they are roughly evenly distributed towards every direction in $\mathbb{R}^M_{+}$.

\begin{definition}[Infeasibility of PF, Full PF \& Partial PF]
Given MaOP $\bm{f}: \mathcal{S} \to \mathcal{O}$ and $truePF(\bm{f})$, the MaOP is said to have a \textbf{fully-covering PF} (or \textbf{full PF}) if
$$\forall \bm{o} \in \mathbb{R}^M_{+}, \exists \bm{o}_{PF} \in truePF(\bm{f})\text{ and }c \in \mathbb{R}, \bm{o} = c \bm{o}_{PF}$$
Else, the problem is said to have a \textbf{partially-covering PF} (or \textbf{partial PF}) .
\end{definition}

\begin{theorem}
MaOPs with partial true PFs must have partial FOS.
\end{theorem}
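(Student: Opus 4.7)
The plan is to prove the theorem by its contrapositive: if the FOS is full, then the true PF is full. To that end, I would first observe that $I_{trivial}$ and $\mathcal{O}$ are disjoint, since any feasible point dominating a PF point would by definition contradict the PF membership of the dominated point. So under the full FOS hypothesis, $\mathbb{R}^M_+$ is partitioned into $I_{trivial}$ and $\mathcal{O}$.

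Next, I would pick an arbitrary $\bm{o} \in \mathbb{R}^M_+$ and study the ray $R(\bm{o}) = \{c\bm{o} : c > 0\}$. Using strict positivity of $\bm{f}$ and a mild boundedness assumption on $truePF(\bm{f})$, I note that for sufficiently small $c$ the point $c\bm{o}$ has each coordinate below every true PF point and hence lies in $I_{trivial}$, while for sufficiently large $c$ the point $c\bm{o}$ cannot dominate any PF point, so it is forced into $\mathcal{O}$ by the full FOS hypothesis. I would then define the transition value
\[
c^{\ast} \;=\; \sup\{\,c > 0 : c\bm{o} \in I_{trivial}\,\},
\]
which is finite and strictly positive, and show that $c^{\ast}\bm{o}$ lies on the true PF, yielding $\bm{o} = (1/c^{\ast}) \cdot (c^{\ast}\bm{o})$ with $c^{\ast}\bm{o} \in truePF(\bm{f})$, i.e.\ the ray hits the PF. Since $\bm{o}$ was arbitrary, this gives a full PF.

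The main obstacle is the last step: rigorously showing that $c^{\ast}\bm{o}$ is a true PF point. I would attack this by analysing $c^{\ast}$ via the auxiliary function $g(\bm{o}_{PF}) = \min_{i} o_{PF,i}/o_i$, noting that $c\bm{o} \in I_{trivial}$ iff $c \le g(\bm{o}_{PF})$ for some $\bm{o}_{PF}$, so $c^{\ast} = \sup_{\bm{o}_{PF}} g(\bm{o}_{PF})$. Assuming the PF is compact so this sup is attained by some $\bm{o}^{\ast}_{PF}$, I would split into two cases. If $c^{\ast}o_i = o^{\ast}_{PF,i}$ for every $i$ then $c^{\ast}\bm{o} = \bm{o}^{\ast}_{PF}$ and we are done. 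Otherwise $c^{\ast}\bm{o}$ strictly dominates $\bm{o}^{\ast}_{PF}$, placing it in $I_{trivial}$; but for every $c > c^{\ast}$ the point $c\bm{o}$ must lie in $\mathcal{O}$ (by full FOS and the definition of $c^{\ast}$), and letting $c \downarrow c^{\ast}$ with $\mathcal{O}$ closed forces $c^{\ast}\bm{o} \in \mathcal{O}$, contradicting disjointness of $I_{trivial}$ and $\mathcal{O}$. Hence only the first case survives, proving the contrapositive.

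The delicate points I expect to trip on are (i) the precise closedness/compactness hypotheses needed to make the limiting argument tight, and (ii) handling the boundary case where the sup in $c^{\ast}$ is only approached but not attained; I would either add these as standing regularity assumptions on $\bm{f}$ or handle the non-attained case by extracting a convergent subsequence of PF points whose limit plays the role of $\bm{o}^{\ast}_{PF}$.
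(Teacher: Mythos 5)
Your argument is sound in substance, but it takes a genuinely different (and far more explicit) route than the paper. The paper's proof is a two-sentence direct argument: under a full FOS the feasible set is exactly $\mathbb{R}^M_{+} \setminus I_{trivial}$, and since the true PF must dominate every other feasible point, a partially-covering PF cannot discharge that obligation over the whole complement of $I_{trivial}$; hence a non-trivial infeasible part must exist. You instead prove the contrapositive constructively: you partition $\mathbb{R}^M_{+}$ into $I_{trivial}$ and $\mathcal{O}$, scan each ray $\{c\bm{o}\}$, and exhibit the transition value $c^{\ast} = \sup_{\bm{o}_{PF}} \min_i o_{PF,i}/o_i$ as an actual PF point on that ray, which is strictly more information than the theorem asks for. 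The price you pay is a collection of regularity hypotheses the paper never states --- boundedness/compactness (or at least closedness) of $truePF(\bm{f})$, closedness of $\mathcal{O}$, and the antichain property of the PF needed for the disjointness of $I_{trivial}$ and $\mathcal{O}$ --- all of which you correctly flag; without them your limiting argument in the second case genuinely needs the subsequence patch you describe. What you gain is that the one assertion the paper leaves unjustified (``partial PFs cannot dominate all points'') is replaced by a concrete witness: the ray direction missed by the PF yields, at the transition between $I_{trivial}$ and $\mathcal{O}$, a point that can belong to neither set, which is exactly the contradiction the paper gestures at. Both arguments are fundamentally the same ray-separation idea; yours is the rigorous version, the paper's is the slogan.
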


\begin{proof}
Partial PFs cannot dominate all points in a FOS with only the trivial infeasible part. Thus the FOS must have a non-trivial infeasible part.
\end{proof}

An MaOP with full FOS must have full PF, but the converse is not true.
\par
Reference vectors are often used as the guidelines for evolutionary selection. The individuals associated to a reference vector will be selected using some criterion that penalizes the deviation from the reference vector's direction for diversity and simultaneously awards the individuals with shorter projection lengths on the reference vector's direction for proximity, given the reference vectors are correctly set. However, for problems with partial FOS, fixed reference vectors are no longer good guidelines, as indicated in Fig. \ref{fig:guidelines}.
\begin{figure*}[!t]
\centering
\subfloat[Ideal guidelines for selection]{
\captionsetup{justification = centering}
\includegraphics[width=0.4\textwidth]{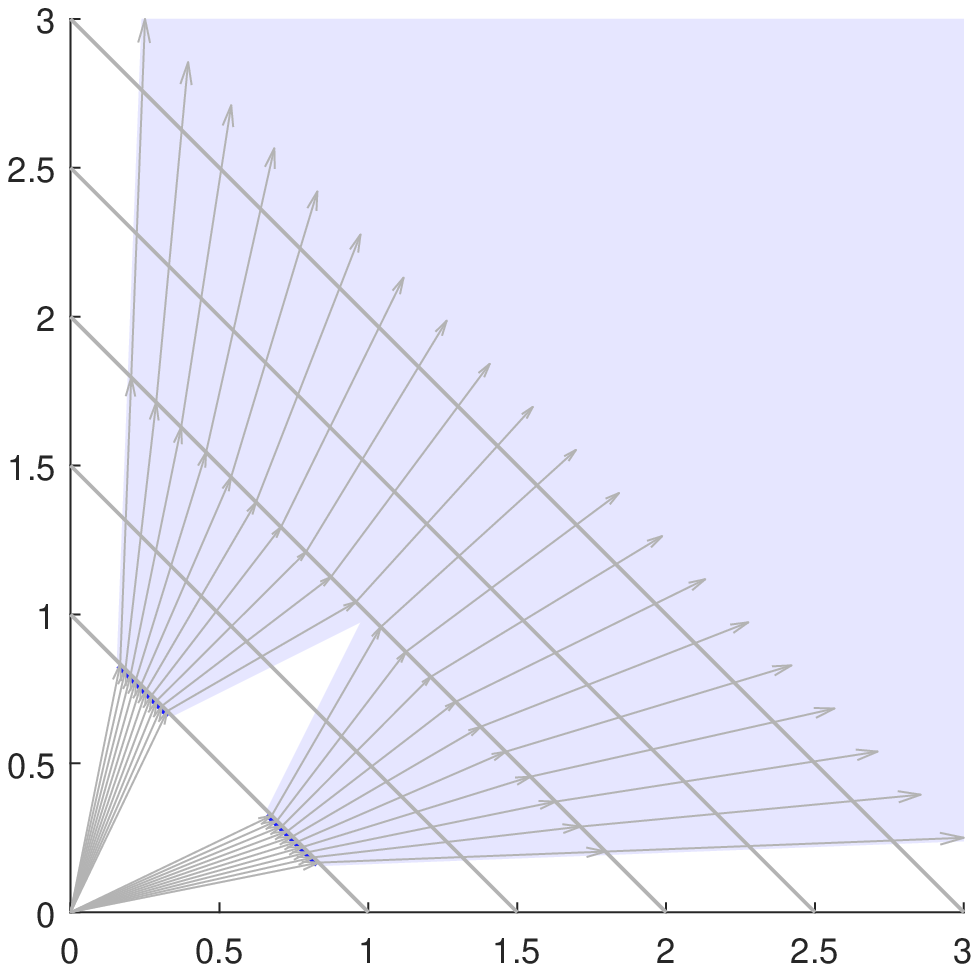}}
\hfill
\subfloat[Reference vectors should have approximated the guidelines]{
\captionsetup{justification = centering}
\includegraphics[width=0.4\textwidth]{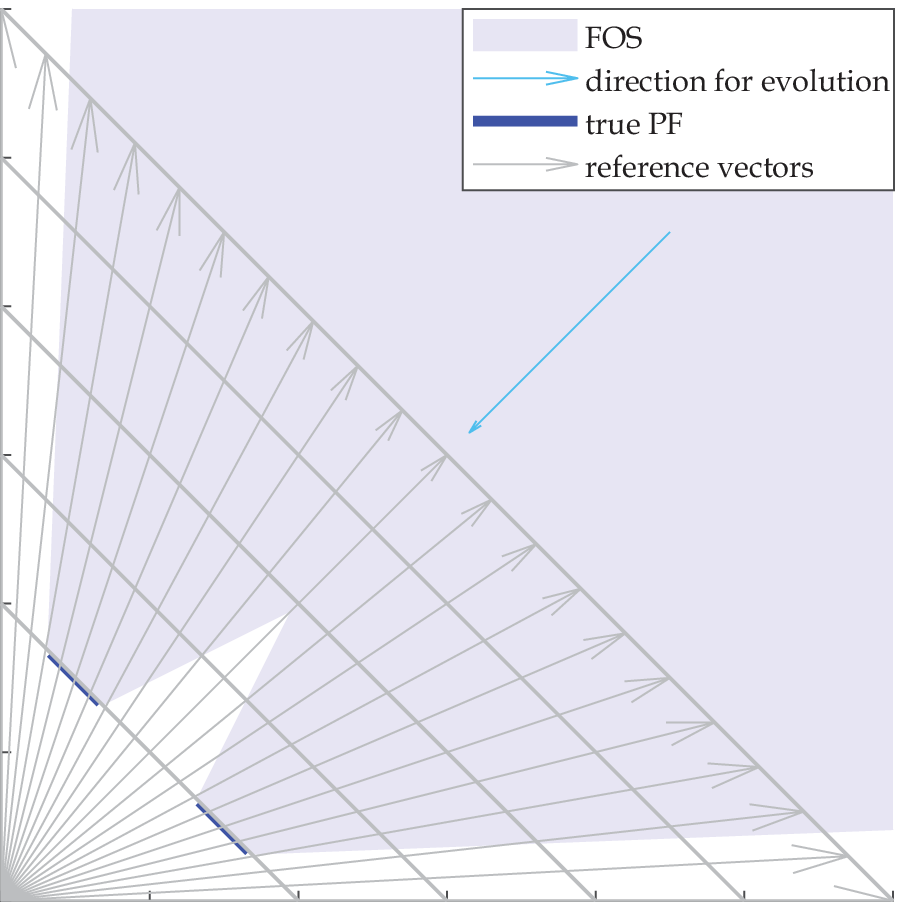}}

\caption{Reality for reference vectors being the guidelines for selection. The FOS is shaded in light blue and the true PF is outlined in dark blue, with the reference vectors drawn as grey quivers. In (a), the best possible guidelines that segment the FOS evenly is given using prior knowledge about the problem; In (b), reference vectors are uniformly generated using the traditional methods, which have huge space left for adaptation.}
\label{fig:guidelines}
\end{figure*}
\par
Reference vector based algorithms with a set of fixed reference vectors are designed for full FOS problems. Previous works show that fixed reference vectors, if not set with prior knowledge, are misleading for the selection of individuals toward diversity for problems with partial true PFs \cite{zhao2018fast, ge2018interacting}. But what do we do if the distribution of the current PF is changed due to the encountered non-trivial infeasible parts of the objective space? Can we adapt the reference vector in-time \st{} they approximate the ideal guidelines at the intersection points of the guidelines with the current PF?

\subsection{Related Works \& Motivations}
There are a lot of existing works for adjusting the reference vectors with the goal of handling irregular PFs.
\par
Some works focus on dealing with the curvature of true PF, MOEA/D-AWA deletes the overcrowded reference vectors and inserting new reference vectors in the sparser spaces \cite{qi2014adaptive}; RVEA in \cite{cheng2016reference} tilts the unit hyperplane to cater to the distribution of the individuals to address the problems caused by the different magnitudes of objectives; \cite{wu2018learning} proposes to use Gaussian process to fit the hypersurface of the current PF for the distribution of reference vectors.
\par
There are also works focusing on the infeasibility of the true PF, \ie{} partial true PFs \cite{ge2018interacting}: Approaches like A-NSGA-III \cite{jain2014evolutionary} and RVEA* \cite{cheng2016reference} employ the strategy of inserting extra reference lines into crowded areas with the risk of the disturbance of uniformity of the distribution of reference vectors. To deal with the disturbance of uniformity, Zhao and Ge \etal{} proposed a series of reference vector algorithms with interactive components capable of learning the complex effective areas (scaled projection of the true PF onto the unit hyperplane with $L_1$-norm 1) if the problems are easy enough to converge early.
\par
These works all share one underlying assumption: the MaOP is easy enough for selection engines to obtain population at the vicinity of the true PF. But for the problems with infeasible parts in the objective space and high difficulty, \eg{} problems with large dimensionality of the solution space and partial FOS, such assumption will not hold. The inadaptability of the existing algorithms on these problem motivate us to address such problem. In 2013, Wang \etal{} contributed a prototype algorithm PICEA-w \cite{wang2013preference} which co-evolves the set of reference vectors alongside the population for a similar purpose, though not tested and still have the problems for the disturbance of uniformity and so on. Can we extend the adaptation of reference vectors to a in-time manner \st{} they might appropriately serve as the right guidelines of selection for the evolving population? The answer is yes - we found a viable solution in the interacting component framework.

\subsection{Inspiration}
Here we propose a general idea to the in-time adaptability to the population in the FOS: if we are able to track the current PF of the evolution reasonably precisely, we can use the tracked PF to somehow adapt the reference vectors, in which case we can approximate the ideal selection guidelines, indicated in Fig. \ref{fig:guidelines}, using the reference vectors!
\par
The problems left for implementing such idea are mainly two:
\begin{enumerate}
\item
How to track the current PF in an effective and efficient way?
\item
How to adjust the references using the tracked PF?
\end{enumerate}
\par
Our solution to the first problem, \ie{} tracking the current PF, is not a parametric model since such model may accidently do interpolation and extrapolation within the infeasible spaces. Thus, we use something simpler: since individuals found during the evolution process never violate the feasibility of the FOS, we just buffer them in an archive! Such idea has been exploited for similar purposes in \cite{Praditwong2006archive, wang2015improved}.
\par
The second problem is way more tricker. Our solution, the main novelty of this paper, is a delicate mechanism constituted of two subroutines that propagate the distribution of the current PFs onto layers of reference vectors corresponding to different densities. Such mechanism can effectively adapt to shrinking or expanding cross-sections of the current PF and the FOS without with marginal suffering of the disturbance of uniformity.
\par
In this paper, we provide an algorithm \algoabbr{} that contains two assisting archives and two interacting engines, aiming to provide versatility for feasibility change of the FOS, as demonstrated in Fig. \ref{fig:interaction}.

\section{\algoabbr{}}
\label{section:framework}
\begin{figure}
\centering
\includegraphics[width = 0.55\textwidth]{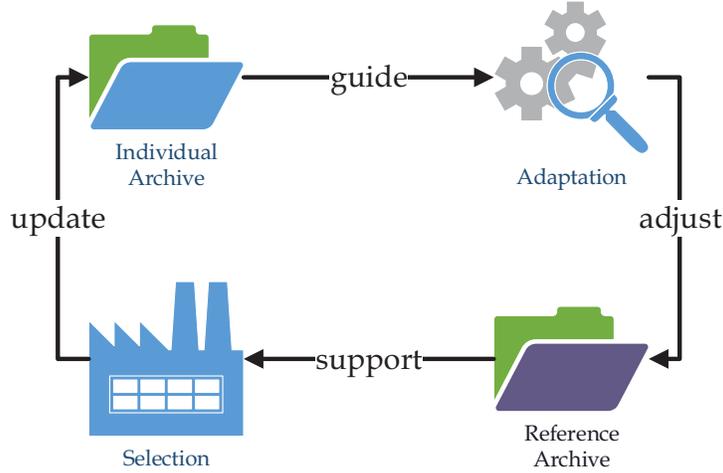}\\
\caption{Interaction between the two engines with the participation of the two archives}
\label{fig:interaction}
\end{figure}
\subsection{Individual Archive}
The current PF is constituted by the objective vectors of the individuals that have never been dominated by the individuals ever found in the optimization process. The information of the current PF is necessary to identify and adapt to the changes in the FOS. However, since typical selection methods for population only leaves $N$ individuals, parts of the information about the current PF will be potentially lost. The individuals that have not been dominated should be archived and maintained as another population (apart from the ``population'' we use to generate offsprings) to keep such information, but not in an ever-increasing manner: we should archive a reasonable amount of individuals that have not been dominated which could well reflect the spatial distribution of the current PF. This calls a proper elimination method for the most undesirable archived individuals. Thus we seek to employ cascade clustering \cite{zhao2018fast, ge2018interacting}, a population selection strategy that keeps the distribution of the population as complete as possible, for the update of such archive, which we named the Individual Archive (IA). Whenever new individuals are found, we use cascade clustering to find and archive the cluster centers (the best individual for the adjacent spaces of reference vectors \wrt{} the criterion of cascade clustering) among the population constituted of newly evolved individuals and the archived individuals. Different to employing cascade clustering for population selection, since IA maintenance only seeks to track the distribution of PF, we only have to buffer the cluster centers that reflect the distribution of PF and discard the rest. This means that IA keeps at least $|Z|$ individuals, where $|Z|$ is the number of participating reference vectors.
\par
Assuming the number of reference vectors is a moderate function of $N$, \ie{} $|Z| = \mathbb{O}(N)$, the maintenance of IA is running at the complexity level $\mathcal{O}(MN^2)$, which depends on the complexity of cascade clustering.
\subsection{RA: Reference Archive}
The distribution of the reference vectors that participate the selection should change in-time to cater with the shape of the current PF. We propose another archive for the reference vectors, which we named the Reference Archive (RA). Since the shape of current PF changes, the local densities of reference vectors should also change. Thus we have designed RA to be hierarchical: each layer inside RA only contains reference vectors generated using the same density; layers with different densities can be combined to achieve higher density. The details of updating the RA will be introduced with the adaptation engine of the reference vectors.

\begin{algorithm}[!t]
\caption{construct $i$-th layer of RA: \textbf{new\_layer($A_R$, $i$)}}
\label{code:updateHRA}
\KwIn{$A_{R}$ (the current RA), $i$}
\KwOut{$\langle Z, \bm{a}, \bm{b}\rangle$ (tuple of reference vectors $Z$, association vector $\bm{a}$, status vector $\bm{b}$)}

\textcolor{darkgreen}{//get the current density of reference points in RA}\\
$d = \text{get\_density}(A_{R}.l_{i - 1})$;\;

\textcolor{darkgreen}{//generate a set of reference vectors with higher density}\\
$Z = \text{uniform\_points}(d)$;\;

\textcolor{darkgreen}{//eliminate redundant reference vectors}\\
$Z^{'} = \emptyset$
\For{$l_i \in A_{R}$}{
    $Z^{'} = Z^{'} \cup A_{R}.l_i.Z$;

}
$Z = Z - Z^{'}$;\textcolor{darkgreen}{//set difference operation}\\

\textcolor{darkgreen}{//boolean vector representing if the corresponding reference vector is enabled, all initialized as disabled}\\
$\bm{b} = \bm{0}_{|Z| \times 1}$;\\

\textcolor{darkgreen}{//index vector associating the reference vectors in the new layer with the ones archived in RA. ``associateto($P$, $Q$)'' associates $\forall \bm{p} \in P$ with the nearest $\forall \bm{q} \in Q$}\\
$\bm{a}$ = associateto($Z$, $Z^{'}$);\\
\end{algorithm}

\subsection{Selection Engine}
This paper provides no novelty in neither the generation of offsprings nor selection of population. We employ the selection method of cascade clustering proposed and ameliorated in \cite{zhao2018fast, ge2018interacting}. Briefly speaking, cascade clustering selects a population that is evenly spread \wrt{} the given reference vectors in a cascade style to achieve proximity and diversity. It has shown state-of-the-art effectiveness and efficiency when compared to state-of-the-art reference vector based selection methods. Also, it has flexible interfaces for the adaptations of reference vectors, which suits the need of this paper. We used this to maintain the IA, now we also use this to select the population. Using cascade clustering both to maintain IA and to select population, we can make sure that the reference vectors that could be activated will always keep at least one associated individual, without the fear of losing them in the population selection. Surprisingly, we can also prove that using cascade clustering to do two tasks sequentially is equivalent to doing them two at the same time. The combined selection and update of the IA is demonstrated in Fig. \ref{fig:cascade_clustering}.
\begin{figure}
\centering
\includegraphics[width = 0.65\textwidth]{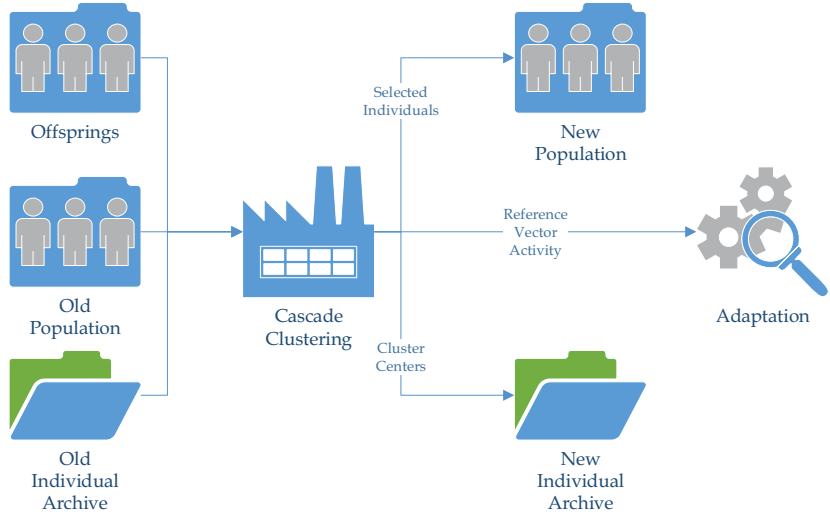}\\
\caption{Combining two cascade clustering processes into one: the equivalent process}
\label{fig:cascade_clustering}
\end{figure}
\par
The overall runtime complexity for the process is $O(MN^2)$, assuming $|Z| = \mathcal{O}(N)$. We provide the pseudo code of the selection based on cascade clustering in Alg. \ref{alg:cc}, without diving into its details since we made no amelioration. For more details, refer to \cite{ge2018interacting}.
\begin{algorithm}[!t]
\small
\caption{Cascade Clustering: \textbf{$P$, $I$, $Q$ = CC($P$, $Z$, $N$)}}
\label{alg:cc}
\KwIn{$Z$ (set of reference vectors), $P$ (potential population), $N$ (population size for the next generation)}
\KwOut{$P$ (population for the next generation), $I$ (indices of active reference vectors), $Q$ (cluster centers)}

\textcolor{darkgreen}{//frontier individual identification}\\

$[F, NF] \leftarrow $ frontier\_identification($P$)\;

%

\textcolor{darkgreen}{//attach frontiers to reference vectors, return the clusters}\\
$C \leftarrow $attach($F$, $Z$, 'point2vector')\;
$I \leftarrow $cluster2indices($C$)\;

\For{each cluster $c_i \in C$}{
    \For{each frontier $\bm{f_j}$ in $c_i$}{
        $PDM(\bm{f_j}) \leftarrow mean(\bm{f_j}) + sin(\bm{z_i}, \bm{f_j})$;
    }
    $c_i.F \leftarrow $ sort($c_i.F$, $PDM(c_i.F)$, ascend)\;
    Pick out ${c}_i.{f}_j$ with the smallest PDM as $c_i.center$;
}

\textcolor{darkgreen}{//attach non-frontiers to clusters}\\

$C  \leftarrow $attach($NF$, $C$, 'point2center')\;

\For{each cluster $c_i \in C$}{
    $c_i.NF \leftarrow$ sort($c_i.NF$, $d(c_i.NF, c_i.center)$, ascend)\;
    create selection queue $c_i.S \leftarrow \langle c_i.F, c_i.NF \rangle$\;
}
\textcolor{darkgreen}{//round-robin picking}\\
$i \leftarrow 1$\;
$P \leftarrow \emptyset$\;
\While{$|P| < N$}{
    $P \leftarrow P \cup Pop(c_i.S)$\;
    $i \leftarrow mod(i, |C|) + 1$;
}

\textcolor{darkgreen}{//specify the cluster centers}\\
$Q \leftarrow \emptyset$;
\For{each cluster $c_i \in C$}{
    $Q \leftarrow Q \cup \{c_i.center\}$;
}
\end{algorithm}
\par
Cascade clustering additionally returns the indices of the active reference vectors, which correspond to the reference vectors attached with nondominated individuals, telling the adaptation where the reference vector should be \ie{} the distribution of the current PF.
\subsection{Adaptation Engine}
Reference vectors serve as guidelines for the selection of individuals towards proximity and diversity in the objective space. For problems with full FOS, simple reference vectors generated could serve as the ideal guidelines. However, for objective spaces with unfeasible parts, the ideal guidelines become much more complicated and unadjusted reference vectors are far from ideal guidelines, as demonstrated in Fig. \ref{fig:guidelines} (a). For these objective spaces, sticking to the fixed reference vectors hurts diversity and potentially proximity as well. It is needed to adjust the reference vectors in-time \st{} they are always good approximations to vectors towards the intersection points of the guidelines and the current PF, as illustrated in Fig. \ref{fig:shrink}.
\par
One possible solution to such approximation for guidelines of arbitrarily but continuously shrinking and expanding FOS is evenly distributing sufficient number of reference vectors that intersect with the current PF\footnote{The conjecture is metaphoric: the current PF is assumed to be a hypersurface but actually just a set of points in the objective space; Intersection points of reference vectors with the current PF means the orthogonal projection of the objective vectors of the individuals onto the associated reference vectors.}. We can easily come up with a brute force method: buffer sets of evenly distributed reference vectors with different generation densities. After each update of the IA, starting from the set of reference vectors corresponding to the lowest density\footnote{Any generation density corresponding to generating less than $N$ reference vectors should be excluded from the possible selections of generation density.}, see how many reference vectors intersect with the current PF and continue to do so until the number of active reference vectors is around $N$.
\par
The problem with this brute force method is that it is too costly: each adaptation costs approximately $\mathcal{O}(MN^3)$. We seek to contribute a smooth and gradual approximation to such brute force method with significantly less computational cost for adaptations. The idea is simple: for the moment of adjustments, if the number of intersection points of reference vectors and the current PF is below $N$, perhaps due to the shrinkage or insufficient covering of the of current PF, we initialize the ``shrink'' subrountine: generate a set of reference points with a higher density and save this set in the top layer of RA but only enable those within the adjacent spaces of all currently active reference points (disabled points will not participate in the selection). Then we get all the enabled reference vectors stored in every layer of RA to as the participating set of reference vectors. The set addition of evenly distributed reference points yields approximately evenly distributed reference vectors that will potentially intersect the current PF; If the number of intersection points is greater than $N$, which is perhaps due to the expansion of the distribution of current PF, we initialize the ``expand'' subrountine: enable the reference points in the lower layers that are within the adjacent spaces of the active reference points in the densest layer and then delete the densest layer\footnote{In implementation, simply disable it}.
\par
Precalculating the layers of reference vectors and the angles between them reduces the complexity of this process to $\mathcal{O}(MN^2)$, which is exactly what we aim for: significantly lower than the brute force method of $\mathcal{O}(MN^3)$.
\begin{figure}
\centering
\subfloat[fixed reference vectors]{
\captionsetup{justification = centering}
\includegraphics[width=0.28\textwidth]{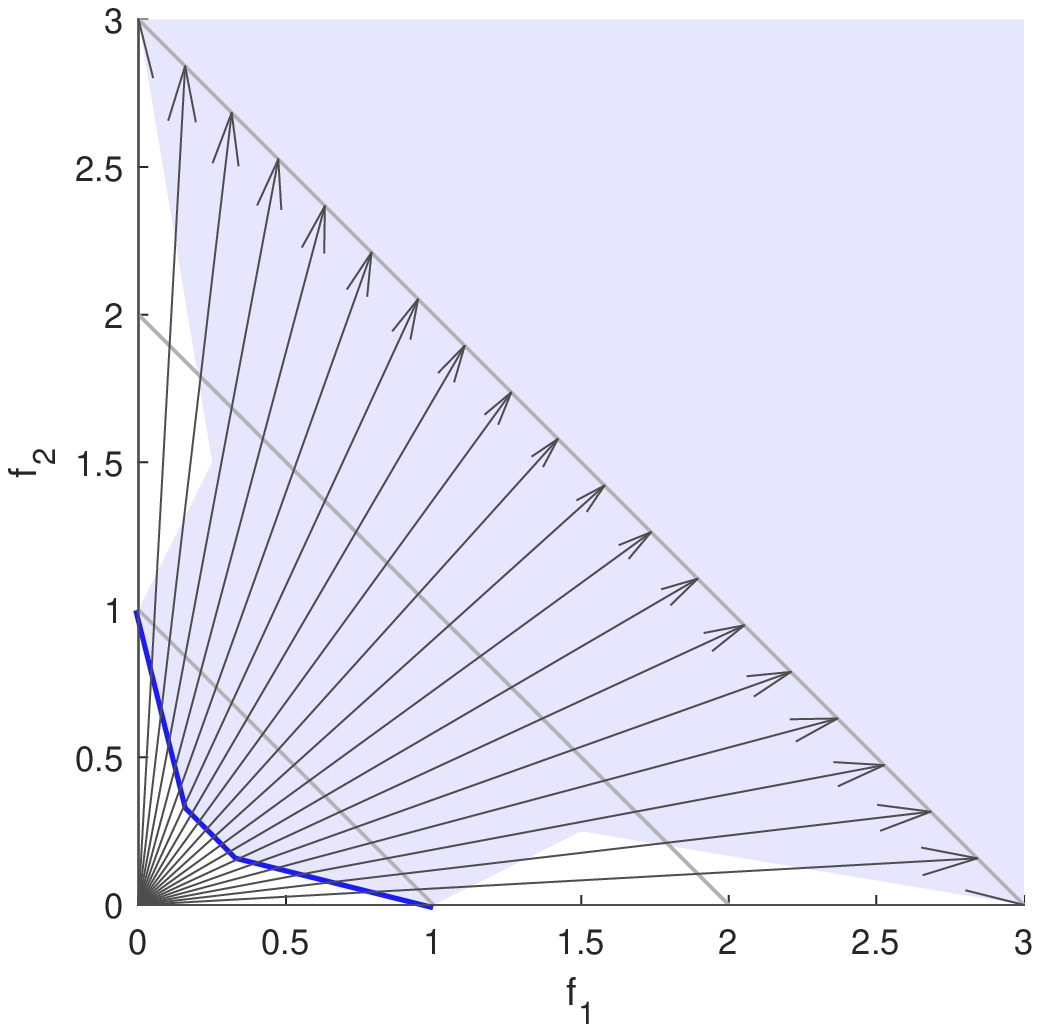}}
\hfill
\subfloat[before RA adjustment]{
\captionsetup{justification = centering}
\includegraphics[width=0.28\textwidth]{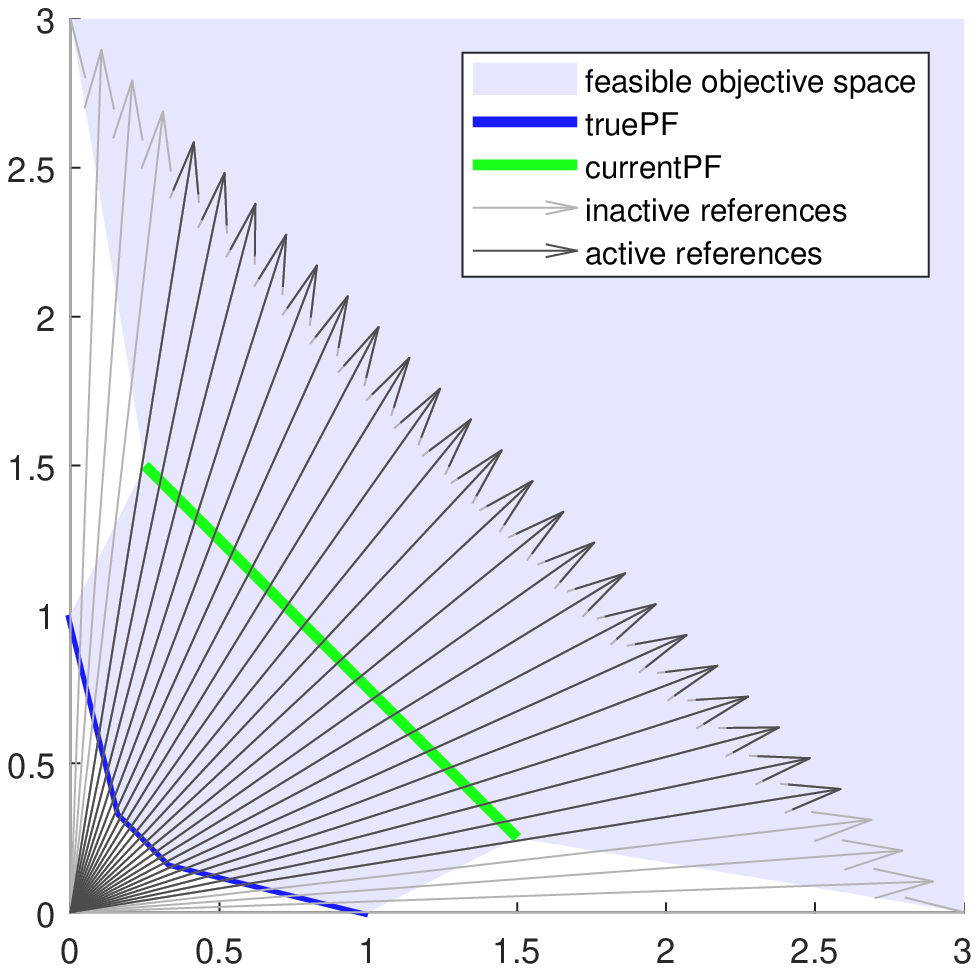}}
\hfill
\subfloat[after RA adjustment]{
\captionsetup{justification = centering}
\includegraphics[width=0.28\textwidth]{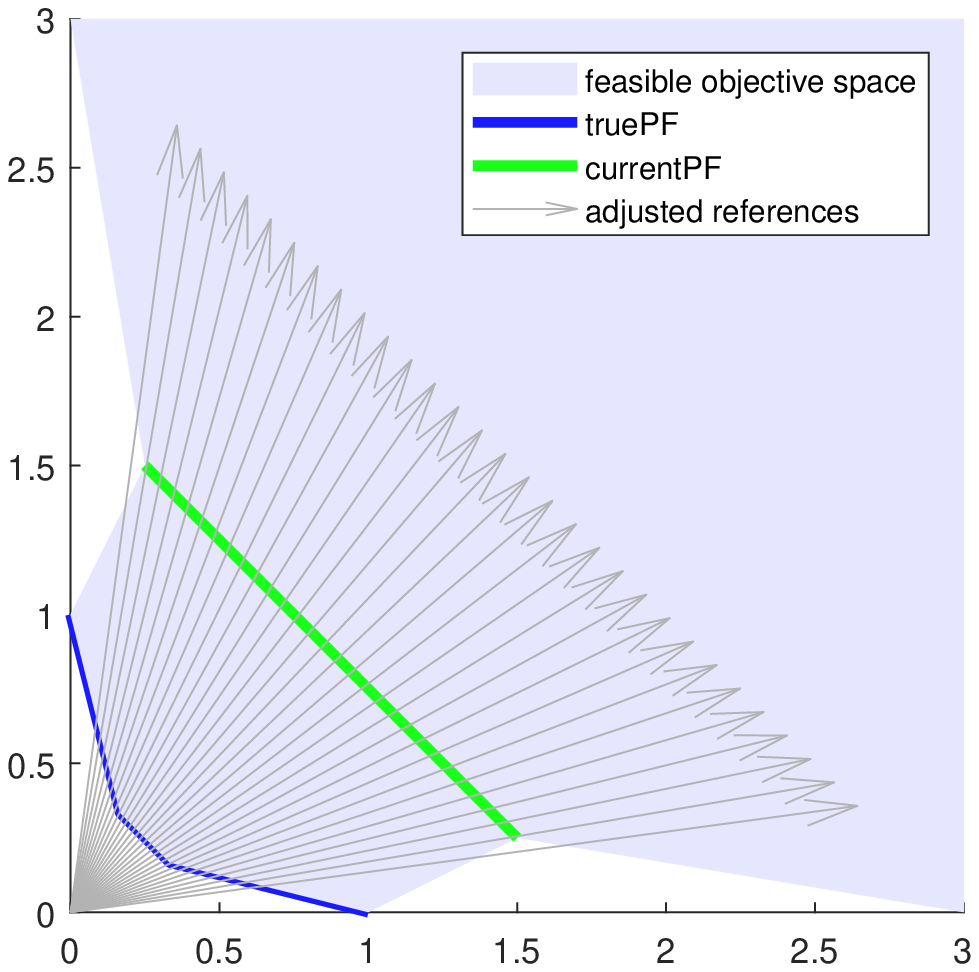}}

\caption{The demonstration for the shrinking of the current PF due to the characteristics of the FOS. The FOS is colored in light blue and the PF is outlined in dark blue. The reference vectors located at the edge of the FOS has inactive parts. Thus the distribution is not always appropriate.}
\label{fig:shrink}
\end{figure}

In the RA, the reference vectors will be uniformly increased and uniformly reduced, thus the disturbance of the uniformity of the reference vectors is relatively low. Visualization of simulated initializations of adaptation is presented in Fig. \ref{fig:adaptation}, with the pseudocode given in Alg. \ref{alg:adaptation}.

\begin{algorithm}[!t]
\caption{Adaptation}
\label{alg:adaptation}
\KwIn{$A_{R}$ (RA), $A_{I}$ (IA), $Z_{active}$ (active reference vectors), $N$ (global population size), $\theta$ (tolerance ratio)}
\KwOut{$A_{R}$ (updated RA), $Z$ (participating reference vectors)}

\textcolor{darkgreen}{//subroutine ``SHRINK''}\\
\If{$|Z_{active}| < (1 - \theta)N$}{

    \textcolor{darkgreen}{//generate a new layer, retrive the corresponding $Z, `\bm{a}', `\bm{b}'$}\\
    $Z, \bm{a}, \bm{b}$ = new\_layer($A_{R}$);\textcolor{darkgreen}{//algorithm \ref{code:updateHRA}}\\
    $A_{R} = A_{R} \cup \{\langle Z, \bm{a}, \bm{b} \rangle\}$;\\

    \textcolor{darkgreen}{//enable the newly generated reference vectors if they are associated with currently active ones in RA}\\
    \For{$\bm{z}_i \in Z$}{
        \If{$A_{R}[a_i] \in Z_{active}$}{ \textcolor{darkgreen}{//the reference vector which $\bm{z}_i$ is associated to is in $Z_{active}$}\\
            $b_i = 1$;\textcolor{darkgreen}{//enable}\\
        }
    }
}

\textcolor{darkgreen}{//subroutine ``EXPAND''}\\

\If{$|Z_{active}| > (1 + \theta)N$}{

    \textcolor{darkgreen}{//extract all layers of RA, implementation only requires indices}\\

    $Z_{|A_{R}|}, \bm{a}_{|A_{R}|}, \bm{b}_{|A_{R}|}$ = depack($A_{R}.l_{|A_{R}|}$);\\
    $Z_{|A_{R}| - 1}, \bm{a}_{|A_{R}| - 1}, \bm{b}_{|A_{R}| - 1}$ = depack($A_{R}.l_{|A_{R}| - 1}$);\\
    $\cdots$;\\

    \textcolor{darkgreen}{//back-propagate the distribution of the current PF towards the lower layers}\\
    \For{$i \in {1, \dots, |A_{R}|}$}{
        $\bm{a}_{|A_{R}|-i+1}^{'}$ = associateto($Z_{|A_{R}|-i+1}$, $Z_{|A_{R}|}$);\textcolor{darkgreen}{//back-associate the points in the lower layer to the last layer, \st{} we can use these associations to enable points in the lower layers}\\
        \For{$\bm{z}^i_{|A_{R}|-i+1} \in Z_{|A_{R}|-i+1}$}{
            \If{$A_{R}[a_i^{'}] \in Z_{active} \cap Z_{|A_{R}|}$}{
                $b_{|A_{R}|-i+1}^i = 1$;\textcolor{darkgreen}{//enable if the point in the lower layer is back-associated to the active points in the last layer}
            }
        }
    }
    $A_{R} = A_{R} - A_{R}.l_{|A_{R}|}$; \textcolor{darkgreen}{//symbolic, disabling is wiser}
}

\textcolor{darkgreen}{//accumulate enabled reference vectors as the set of participating reference vectors}\\
$Z = \emptyset$;\\

\For{$l_i \in A_{R}$}{
    $Z_i, \bm{a}, \bm{b}$ = depack($l$);\\
    \For{$\bm{z}_j \in Z_i$}{
        \If{$b_j = 1$}{
            $Z = Z \cup \{\bm{z}_j\}$;
        }
    }
}
\end{algorithm}

\begin{figure*}
\centering
\subfloat[Initialization of the current PF (population) and the reference vectors]{
\captionsetup{justification = centering}
\includegraphics[width=0.32\textwidth]{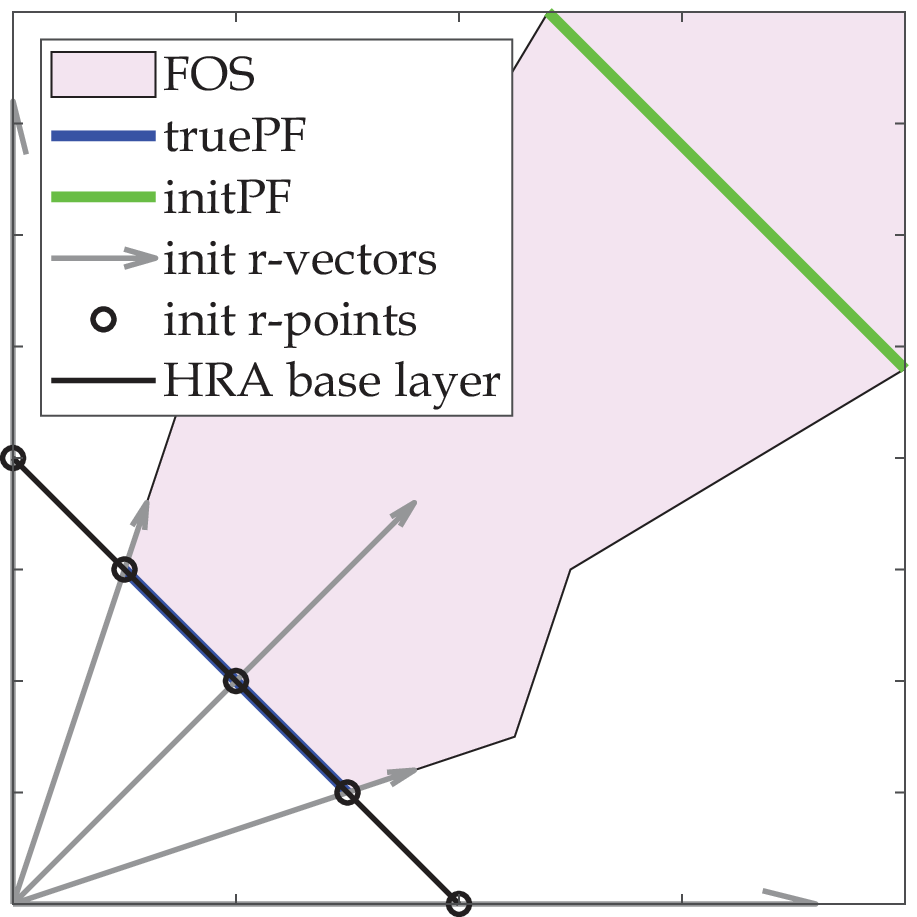}}
\hfill
\subfloat[To shrink: insufficient active reference vectors]{
\captionsetup{justification = centering}
\includegraphics[width=0.32\textwidth]{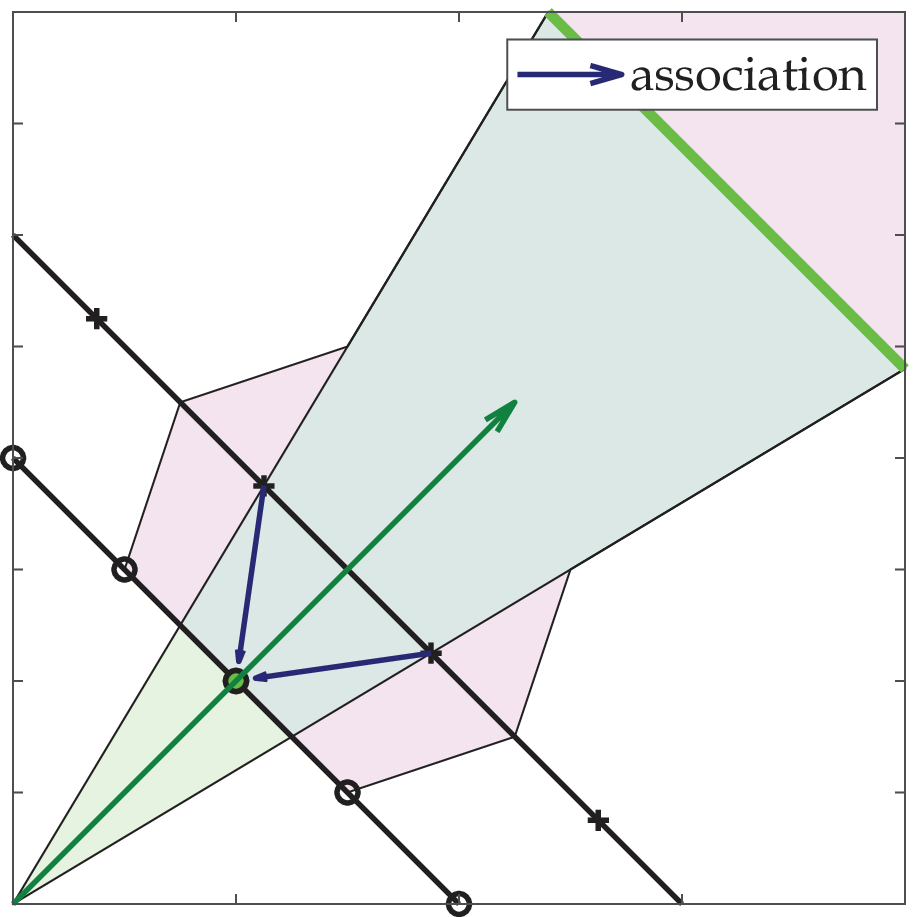}}
\hfill
\subfloat[Shrinked: enabled points on the second layer associating to previously active points. Active reference vectors still insufficient]{
\captionsetup{justification = centering}
\includegraphics[width=0.32\textwidth]{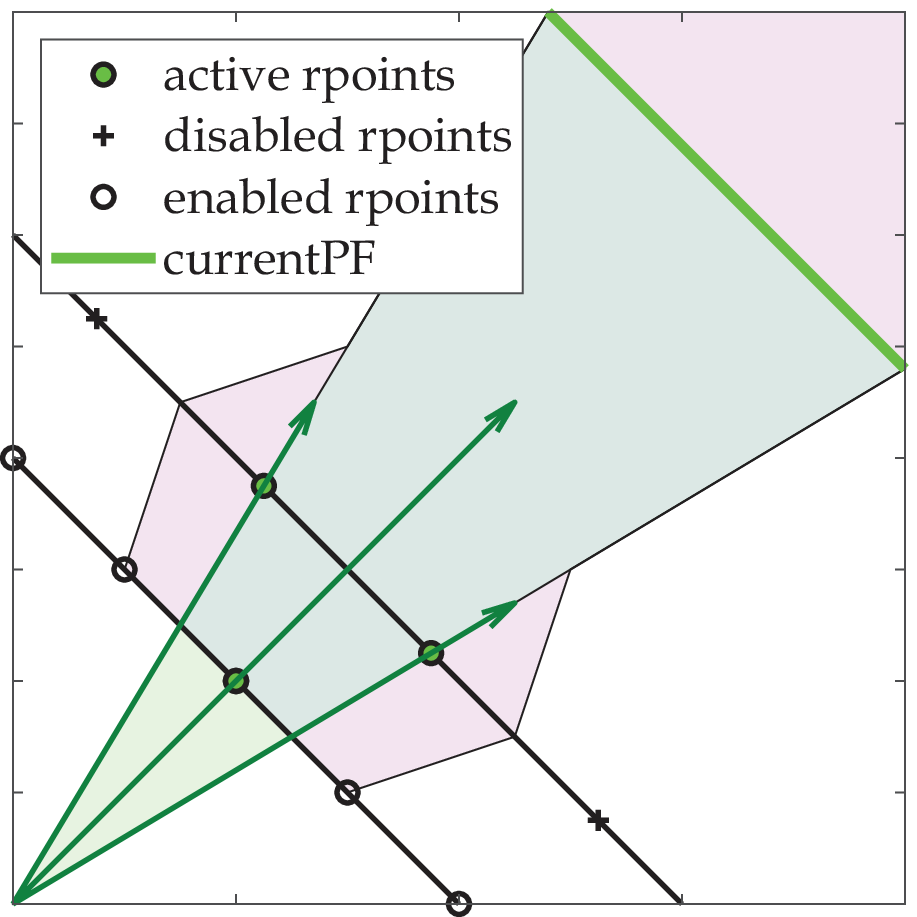}}

\subfloat[Second shrink: appropriate number of active reference vectors]{
\captionsetup{justification = centering}
\includegraphics[width=0.32\textwidth]{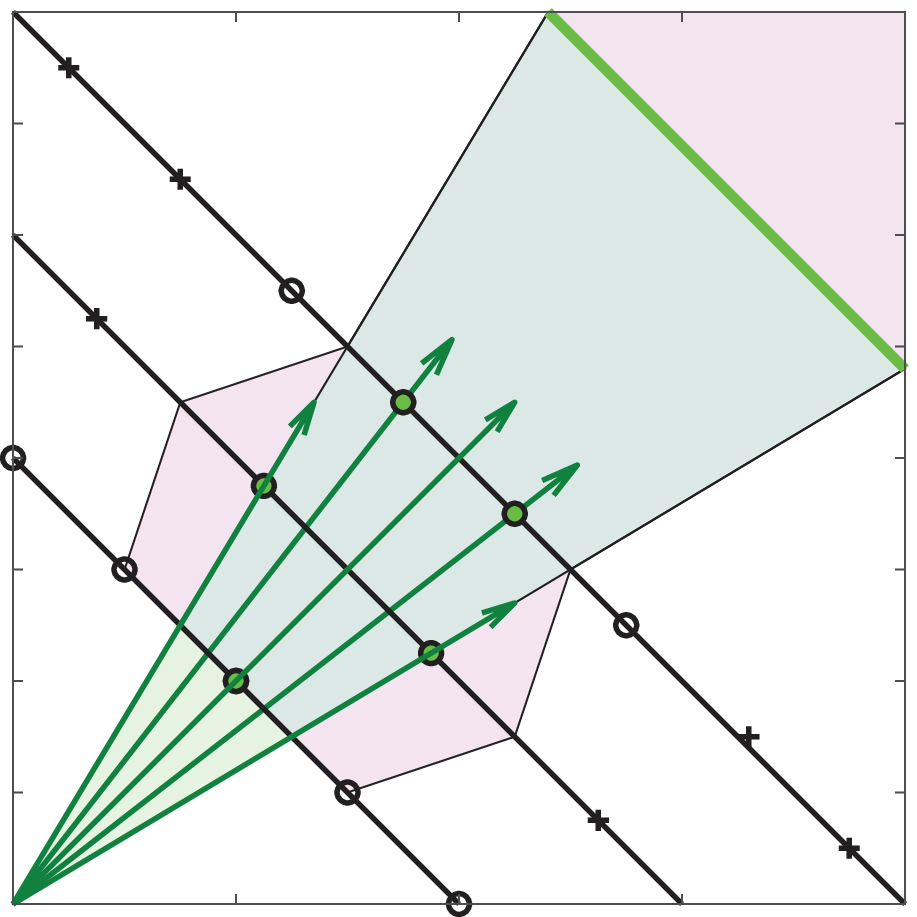}}
\hfill
\subfloat[To expand: too many active reference vectors]{
\captionsetup{justification = centering}
\includegraphics[width=0.32\textwidth]{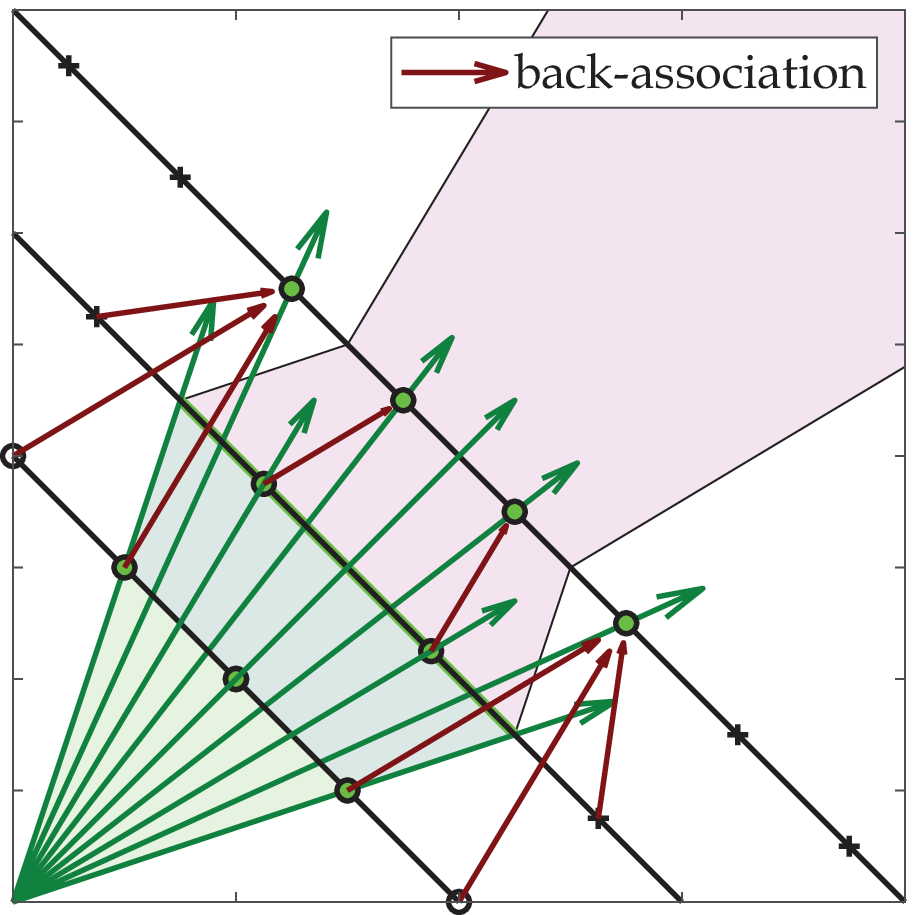}}
\hfill
\subfloat[Expanded: appropriate number of active reference vectors]{
\captionsetup{justification = centering}
\includegraphics[width=0.32\textwidth]{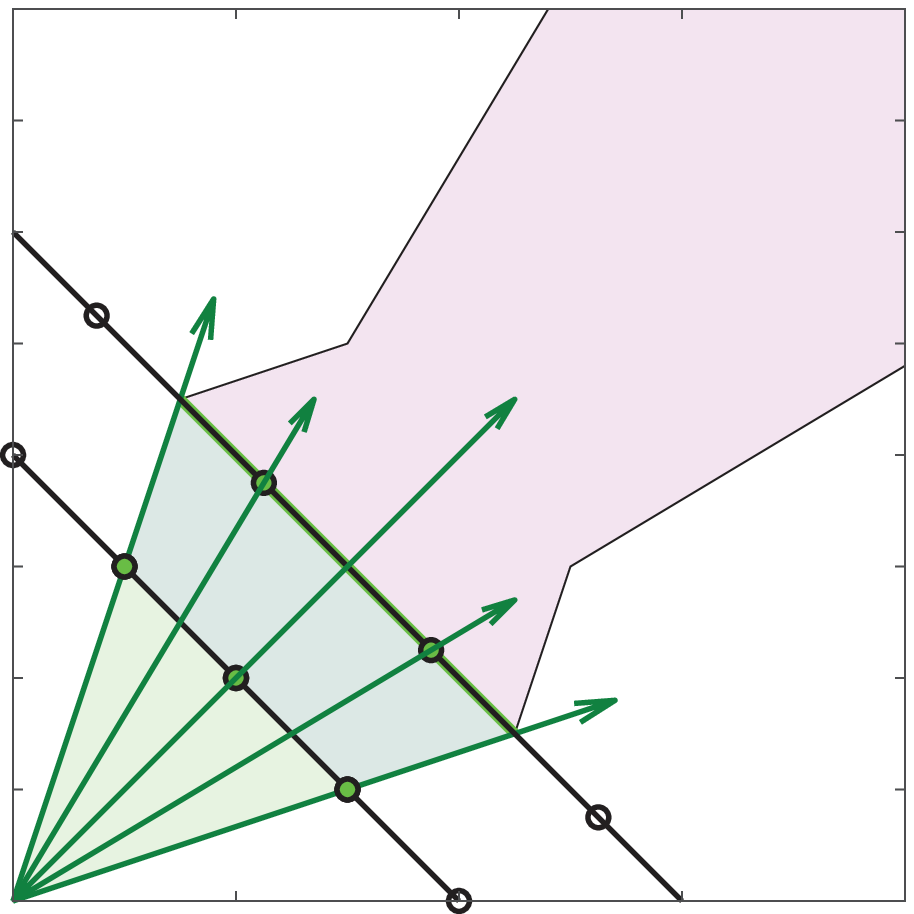}}

\caption{Demonstration of the subroutines ``shrink'' and ``expand'' on an imaginary problem with non-trivial infeasible parts in the FOS: (a). We assume that the population is initialized far away from the true PF, constituting a hyperplane that is roughly parallel to the unit hyperplane. Also, we assume that the currentPF evolves gradually towards the true PF, representing a scenario in which optimization difficulties on the objectives are isotropic; (b). The number of active reference vectors are not enough, thus the subroutine ``shrink'' is activated: the reference vectors in second layer of RA that are associated with the currently activated reference vectors are enabled; Note that the reason why the reference points in the layers are non-overlapping is that the reference vectors that can be found in previous layers are excluded; (c). After the first shrink, there are more active reference vectors, but still not enough. Need more shrinks; (d). After the second shrink, the number of active reference vectors are appropriate, which means it is roughly equivalent to $N = 5$ in the demonstration; (e). The distribution of current PF expands. The number of active reference vectors is too high and thus the subroutine ``expand'' will be triggered: the distribution of the active reference vectors (the distribution of the central projection of the current PF onto the unit hyperplane) will be back-propagated from the densest layer to the previous layers via the computed back-associations; (f). After the expansion, there are again appropriate number of reference vectors intersecting the current PF roughly evenly. Notice that the forward- and back-propagation both enables points located on the edge of the central projection \st{} the change of the current PF can be detected using the number of activated reference vectors.}
\label{fig:adaptation}
\end{figure*}
\subsection{Combining All Together}
We now give the proposed algorithm \algoabbr{} which combines the $4$ proposed components. At the beginning, to initialize, $N$ individual solutions are randomly generated and $A_R$ is initialized with $N$ reference points uniformly generated on the unit hyperplane as the base layer. Then, the main cycle loops:
\begin{enumerate}
    \item Evolve the old population to get the $N$ offsprings by employing a certain heuristic;
    \item Pick $N$-sized population and new individual archive out of the union of the last population, the newly evolved offsprings and the individuals in the individual archive using cascade clustering. Additionally, provide the activity of reference vectors.
    \item Check the stability of the activity of the reference vectors, report whether it is appropriate to adjust the distribution of the reference vectors in the RA according to IA. If not \textbf{continue};
    \item Initialize adaptation, which may trigger one of the two subroutines. The set of participating reference vectors will be provided to the selection engine for selection.
\end{enumerate}
When the termination criteria are satisfied, we pick final population using the individuals buffered in the IA using cascade clustering. Note that in order to make \algoabbr{} more stable, \ie{} reduce the number of unnecessary adaptation, we introduce a mechanism of checking the stability of activity: if the activity of each reference vector does not change for $w$ generations, the mechanism reports stable else unstable; Else, the adaptation will not be triggered.
\section{Experimental Studies}\label{section:experiments}
This section gives the analyses for the effectiveness of \algoabbr{} on benchmark problems, including the validation of the proposed components, comparison with the state-of-the-art algorithms, \etc{}.
\subsection{Settings}
The settings of the experiments are identical to those for the CEC'2018 MaOP competition \cite{cheng2017benchmark}, which uses the MaF benchmark suite, where $15$ scalable many-objective benchmark functions. Among them, $8$ cases are with infeasible parts in the objective spaces (MaF1, MaF2, MaF4, MaF6 - MaF9, MaF15), $7$ cases are with full FOS (MaF3, MaF5, MaF10 - MaF14). Specifically, MaF8 and MaF9 are with narrow FOS, MaF6 is with extremely narrow (degenerate) FOS, MaF3 and MaF13 are hard to converge, MaF7 is with disconnected FOS and PF, MaF14 and MaF15 are with large-scale solution space. Each benchmark function is scaled to three separate test cases, in which $M = 5$, $M = 10$ or $M = 15$. There is an injective mapping between $M$ and $D$ for each function, and the number of maximum evaluations for each algorithm is set to be $D(M) \times 10 ^ {4}$. All algorithms are restrict to using the population size of $N = 240$, as well as the classical simulated binary crossover and polynomial mutation optimizer.
\par
Results given in this section are averaged over $20$ independent runs on identical platforms with MATLAB R2019a. The source code\footnote{The MATLAB source code will be published in Mingde Zhao's github: \url{https://github.com/PwnerHarry/}} of \algoabbr{} is implemented with PlatEMO 2.0 \cite{PlatEMO}.
\par
We evaluate the quality of the obtained populations using the Inverse Generational Distance based on $L_2$ norm ($L_2$-IGD, often abbreviated as IGD), a problem dependent evaluation criterion of both proximity and diversity \cite{czyzzak1998pareto}. IGD calculates the average minimum distance from the sample points on the true PF to the points of the population. The smaller the IGD, the better the proximity and diversity.

\subsection{Hyperparameters}
Here we want to determine a fixed set of hyperparameters for all the following experiments, instead of overfitting them for each problem, for fairness of comparison. Also, we want to check if the proposed algorithm \algoabbr{} is sensitive to these hyperparameters. There are two hyperparameters in \algoabbr{}, the window size $w$ to suggest adaptation moments and tolerance ratio $\theta$ for the sensitivity of adaptation \wrt{} the reference vector activities.
\par
Note that when the number of objectives is large, $\theta$ becomes sensitive for the difficulty in generating uniform reference vectors. When the number of evaluations given is low, $w$ becomes sensitive in a problem-specific way for it controls the trade-off between adaptation accuracy (the accuracy of the moments of the adaptation being initialized) and number of adaptations before finish. We select the test case DTLZ7 \cite{deb2005scalable} specifically since it has fractal FOS and intermediate number of evaluations, which is likely an expected scenario for the experiments to come.
\par
For the sake of fair comparison, the set of problems we use for hyperparameter analysis does not have intersection with the experiments later. The results are given in Tab. \ref{tab:hyperparams}. Observing from the results, we can find an approximate interval for the hyperparameters to achieve similarly good performance. We choose $w = 20$ and $\theta = 0.2$ since the change of performance around this point is relatively modest and also for its good performance.
\begin{table*}[htbp]
\setlength{\tabcolsep}{3pt}
\scriptsize
  \centering
  \caption{Hyperparameter Pairs on DTLZ7 ($M = 5$)}
    \begin{tabular}{cccccc}
    \toprule
    \toprule
    \textbf{$w$\textbackslash{}$\theta$} & 0.05 & 0.1 & 0.15 & 0.2 & 0.25 \\
    \midrule
    10    & \cellcolor[rgb]{ .973,  .412,  .42}6.63e-1 & \cellcolor[rgb]{ .988,  .682,  .475}4.85e-1 & \cellcolor[rgb]{ 1,  .894,  .514}3.44e-1 & \cellcolor[rgb]{ .667,  .824,  .498}3.00e-1 & \cellcolor[rgb]{ 1,  .922,  .518}3.25e-1 \\
    15    & \cellcolor[rgb]{ .976,  .431,  .424}6.52e-1 & \cellcolor[rgb]{ .992,  .714,  .478}4.65e-1 & \cellcolor[rgb]{ .698,  .831,  .498}3.02e-1 & \cellcolor[rgb]{ .58,  .8,  .49}2.93e-1 & \cellcolor[rgb]{ .922,  .898,  .51}3.19e-1 \\
    20    & \cellcolor[rgb]{ .988,  .698,  .475}4.75e-1 & \cellcolor[rgb]{ .992,  .776,  .49}4.23e-1 & \cellcolor[rgb]{ .412,  .749,  .482}2.80e-1 & \cellcolor[rgb]{ .478,  .769,  .486}2.85e-1 & \cellcolor[rgb]{ .843,  .875,  .506}3.13e-1 \\
    25    & \cellcolor[rgb]{ .906,  .894,  .51}3.18e-1 & \cellcolor[rgb]{ .659,  .824,  .498}2.99e-1 & \cellcolor[rgb]{ .388,  .745,  .482}2.78e-1 & \cellcolor[rgb]{ .388,  .745,  .482}\textbf{2.78e-1} & \cellcolor[rgb]{ .698,  .831,  .498}3.02e-1 \\
    50    & \cellcolor[rgb]{ .984,  .561,  .451}5.66e-1 & \cellcolor[rgb]{ .988,  .682,  .475}4.85e-1 & \cellcolor[rgb]{ .996,  .78,  .49}4.21e-1 & \cellcolor[rgb]{ .992,  .761,  .49}4.33e-1 & \cellcolor[rgb]{ .992,  .714,  .478}4.65e-1 \\
    \bottomrule
    \bottomrule
	\multicolumn{6}{m{0.3\textwidth}}{\tiny Color indicators are added to for assisting the reading of the results. The greener the better performance, the redder the worse.}\\
	\multicolumn{6}{m{0.3\textwidth}}{\tiny Averaged from $20$ runs on DTLZ7 with $M = 5$, $D = 24$ and $\text{FEs} = 2.4e5$}
    \end{tabular}%
  \label{tab:hyperparams}%
\end{table*}%

\subsection{Comparative Tests on MaF Benchmark Suite}
We try to analyze the characteristics of \algoabbr{} by comparing it with state-of-the-art algorithms including CVEA3 \cite{yuan2018cost}, BCE-IBEA \cite{li2016pareto}, fastCAR \cite{zhao2018fast}, CLIA \cite{ge2018interacting}, AR-MOEA \cite{tian2017indicator}, A-NSGA-III \cite{jain2014evolutionary} and RVEA* \cite{cheng2016reference}. Among them, fastCAR, CLIA, A-NSGA-III and RVEA* are reference vector based with adaptation methods. The characteristics of the compared algorithms are presented in Tab. \ref{tab:algorithms}.
\begin{table*}[htbp]
\setlength{\tabcolsep}{3pt}
\scriptsize
  \centering
  \caption{Details for Compared Algorithms}
    \begin{tabular}{cm{0.8\textwidth}}
    \toprule
    \toprule
    \textbf{Algorithms} & \multicolumn{1}{c}{\textbf{Comments}} \\
    \midrule
    TEEA  & The algorithm proposed in this paper. \\
    CVEA3 & Cost value based MaOEA 3, best-performing (1st) participants of CEC'2018 MaOP competition \cite{yuan2018cost}. The evolution operator has been set to default for fair comparison. \\
    BCE-IBEA & Bi-criterion variant of IBEA \cite{zitzler2004indicator}, one of the most best-performing (3rd) participants of CEC'2018 MaOP competition \cite{li2016pareto}. \\
    fastCAR & Reference vector based MaOEA with periodic adaptation based on margin learning, one of the most best-performing (4th) participants of CEC'2018 MaOP competition \cite{zhao2018fast}. \\
    CLIA  & Improvement upon fastCAR. Reference vector based MaOEA with incremental learning of the PF via component interactions \cite{ge2018interacting}.\\
    AR-MOEA & An indicator based MaOEA based on adaptive reference points \cite{tian2017indicator}.  \\
    A-NSGA-III & NSGA-III with reference vector adaptation \cite{jain2014evolutionary}.  \\
    RVEA* & RVEA with reference vector adaptation \cite{cheng2016reference}.\\
    \bottomrule
    \bottomrule
    \end{tabular}%
  \label{tab:algorithms}%
\end{table*}%

\par
The averaged IGD results of the algorithms are presented in Tab. \ref{tab:IGD}, with additional statistical information including Friedman tests and paired $t$-tests, \etc{}. The Friedman tests suggest that with high confidence we can say that \algoabbr{} the rankings of the algorithms are effective, therefore telling that \algoabbr{} achieves the overall performance. The paired $t$-tests suggest that \algoabbr{} beats several compared state-of-the-art algorithms and achieves similar performance with others. The Friedman tests, $t$-tests as well as straight preservations from the result table all suggest that \algoabbr{} achieves undeniably competitive performance on such set of complex benchmarks problems.
\begin{table*}[htbp]
\setlength{\tabcolsep}{1pt}
\tiny
  \centering
  \caption{Averaged IGD Results on MaF Suite}
    \begin{tabular}{cccccccccccccccccc}
    \toprule
    \toprule
    \multirow{2}[2]{*}{Problem} & \multirow{2}[2]{*}{M} & \multicolumn{2}{c}{\algoabbr{}} & \multicolumn{2}{c}{CVEA3} & \multicolumn{2}{c}{BCE-IBEA} & \multicolumn{2}{c}{fastCAR} & \multicolumn{2}{c}{CLIA} & \multicolumn{2}{c}{AR-MOEA} & \multicolumn{2}{c}{A-NSGA-III} & \multicolumn{2}{c}{RVEA*} \\
          &       & Mean  & Std   & Mean  & Std   & Mean  & Std   & Mean  & Std   & Mean  & Std   & Mean  & Std   & Mean  & Std   & Mean  & Std \\
    \midrule
    \multirow{3}[1]{*}{F1} & 5     & \cellcolor[rgb]{ .396,  .745,  .482}1.08e-1 & 7.25e-4 & \cellcolor[rgb]{ 1,  .918,  .518}1.13e-1 & 1.26e-3 & \cellcolor[rgb]{ .388,  .745,  .482}1.08e-1 & 2.70e-4 & \cellcolor[rgb]{ .949,  .906,  .514}1.12e-1 & 1.20e-2 & \cellcolor[rgb]{ .671,  .824,  .498}1.10e-1 & 7.55e-4 & \cellcolor[rgb]{ .984,  .6,  .459}1.41e-1 & 1.89e-3 & \cellcolor[rgb]{ .973,  .412,  .42}1.58e-1 & 1.45e-2 & \cellcolor[rgb]{ .98,  .545,  .447}1.46e-1 & 8.94e-3 \\
          & 10    & \cellcolor[rgb]{ .388,  .745,  .482}2.33e-1 & 7.45e-3 & \cellcolor[rgb]{ .988,  .918,  .514}2.52e-1 & 4.42e-3 & \cellcolor[rgb]{ .518,  .78,  .486}2.37e-1 & 8.84e-3 & \cellcolor[rgb]{ .988,  .686,  .475}2.96e-1 & 4.67e-7 & \cellcolor[rgb]{ .596,  .804,  .494}2.39e-1 & 3.09e-3 & \cellcolor[rgb]{ 1,  .922,  .518}2.52e-1 & 1.29e-3 & \cellcolor[rgb]{ .996,  .808,  .498}2.73e-1 & 1.11e-2 & \cellcolor[rgb]{ .973,  .412,  .42}3.47e-1 & 2.86e-2 \\
          & 15    & \cellcolor[rgb]{ .439,  .757,  .482}2.67e-1 & 8.85e-3 & \cellcolor[rgb]{ .996,  .831,  .502}3.28e-1 & 4.51e-3 & \cellcolor[rgb]{ .851,  .878,  .506}2.95e-1 & 3.09e-3 & \cellcolor[rgb]{ .996,  .847,  .506}3.24e-1 & 8.07e-12 & \cellcolor[rgb]{ .388,  .745,  .482}2.63e-1 & 2.81e-3 & \cellcolor[rgb]{ .749,  .847,  .502}2.88e-1 & 1.25e-2 & \cellcolor[rgb]{ 1,  .882,  .514}3.15e-1 & 6.29e-3 & \cellcolor[rgb]{ .973,  .412,  .42}4.32e-1 & 3.34e-2 \\
    \multirow{3}[0]{*}{F2} & 5     & \cellcolor[rgb]{ .796,  .863,  .506}9.65e-2 & 2.85e-3 & \cellcolor[rgb]{ .663,  .824,  .498}9.35e-2 & 1.93e-3 & \cellcolor[rgb]{ .388,  .745,  .482}8.72e-2 & 1.16e-3 & \cellcolor[rgb]{ .98,  .557,  .451}1.08e-1 & 1.86e-1 & \cellcolor[rgb]{ .773,  .855,  .502}9.60e-2 & 1.67e-3 & \cellcolor[rgb]{ .973,  .412,  .42}1.11e-1 & 1.03e-3 & \cellcolor[rgb]{ .988,  .678,  .475}1.06e-1 & 1.80e-3 & \cellcolor[rgb]{ .98,  .506,  .439}1.09e-1 & 1.95e-3 \\
          & 10    & \cellcolor[rgb]{ .388,  .745,  .482}1.53e-1 & 2.99e-3 & \cellcolor[rgb]{ .761,  .851,  .502}1.76e-1 & 4.34e-3 & \cellcolor[rgb]{ .714,  .835,  .498}1.73e-1 & 5.04e-3 & \cellcolor[rgb]{ .976,  .451,  .427}2.65e-1 & 2.08e-1 & \cellcolor[rgb]{ .506,  .776,  .486}1.60e-1 & 1.62e-3 & \cellcolor[rgb]{ .996,  .831,  .502}2.06e-1 & 8.73e-3 & \cellcolor[rgb]{ .988,  .663,  .471}2.32e-1 & 2.45e-2 & \cellcolor[rgb]{ .973,  .412,  .42}2.71e-1 & 1.19e-2 \\
          & 15    & \cellcolor[rgb]{ .388,  .745,  .482}1.64e-1 & 4.35e-3 & \cellcolor[rgb]{ .745,  .847,  .502}1.98e-1 & 2.26e-2 & \cellcolor[rgb]{ .973,  .412,  .42}2.86e-1 & 8.86e-3 & \cellcolor[rgb]{ .988,  .647,  .467}2.57e-1 & 1.18e-1 & \cellcolor[rgb]{ .408,  .749,  .482}1.66e-1 & 2.52e-3 & \cellcolor[rgb]{ 1,  .898,  .514}2.26e-1 & 1.02e-2 & \cellcolor[rgb]{ .969,  .91,  .514}2.20e-1 & 1.28e-2 & \cellcolor[rgb]{ .976,  .431,  .424}2.84e-1 & 2.12e-2 \\
    \multirow{3}[0]{*}{F3} & 5     & \cellcolor[rgb]{ .8,  .863,  .506}6.82e-2 & 1.90e-3 & \cellcolor[rgb]{ .388,  .745,  .482}5.93e-2 & 7.58e-4 & \cellcolor[rgb]{ .973,  .412,  .42}1.64e-1 & 4.48e-2 & \cellcolor[rgb]{ .62,  .812,  .494}6.43e-2 & 9.99e-1 & \cellcolor[rgb]{ .584,  .8,  .49}6.36e-2 & 1.12e-3 & \cellcolor[rgb]{ .996,  .784,  .494}9.71e-2 & 1.74e-3 & \cellcolor[rgb]{ 1,  .863,  .51}8.30e-2 & 2.45e-2 & \cellcolor[rgb]{ 1,  .898,  .514}7.68e-2 & 2.65e-3 \\
          & 10    & \cellcolor[rgb]{ .404,  .749,  .482}8.36e-2 & 2.20e-3 & \cellcolor[rgb]{ .561,  .792,  .49}1.69e-1 & 1.19e-2 & \cellcolor[rgb]{ 1,  .922,  .518}6.50e-1 & 3.02e-1 & \cellcolor[rgb]{ .4,  .749,  .482}8.31e-2 & 1.00e0 & \cellcolor[rgb]{ .388,  .745,  .482}7.45e-2 & 2.60e-3 & \cellcolor[rgb]{ 1,  .922,  .518}4.08e0 & 1.18e1 & \cellcolor[rgb]{ .973,  .412,  .42}5.61e6 & 2.01e7 & \cellcolor[rgb]{ 1,  .922,  .518}7.73e0 & 8.66e0 \\
          & 15    & \cellcolor[rgb]{ .412,  .749,  .482}9.11e-2 & 1.27e-3 & \cellcolor[rgb]{ .827,  .871,  .506}1.88e-1 & 3.38e-2 & \cellcolor[rgb]{ 1,  .922,  .518}5.45e-1 & 1.27e-1 & \cellcolor[rgb]{ .404,  .749,  .482}8.97e-2 & 1.00e0 & \cellcolor[rgb]{ .388,  .745,  .482}8.56e-2 & 6.07e-4 & \cellcolor[rgb]{ .996,  .804,  .498}5.48e1 & 1.51e2 & \cellcolor[rgb]{ .973,  .412,  .42}2.35e2 & 6.16e2 & \cellcolor[rgb]{ 1,  .922,  .518}2.66e-1 & 6.95e-1 \\
    \multirow{3}[0]{*}{F4} & 5     & \cellcolor[rgb]{ .502,  .776,  .486}1.77e0 & 3.25e-2 & \cellcolor[rgb]{ .388,  .745,  .482}1.70e0 & 2.24e-2 & \cellcolor[rgb]{ .973,  .412,  .42}3.07e0 & 1.03e0 & \cellcolor[rgb]{ .933,  .902,  .514}2.01e0 & 9.79e-2 & \cellcolor[rgb]{ .733,  .843,  .502}1.90e0 & 6.47e-2 & \cellcolor[rgb]{ .992,  .725,  .482}2.45e0 & 9.25e-2 & \cellcolor[rgb]{ .992,  .776,  .49}2.35e0 & 1.49e-1 & \cellcolor[rgb]{ 1,  .906,  .518}2.09e0 & 9.62e-2 \\
          & 10    & \cellcolor[rgb]{ .8,  .863,  .506}7.50e1 & 2.25e0 & \cellcolor[rgb]{ .388,  .745,  .482}5.10e1 & 2.12e0 & \cellcolor[rgb]{ .988,  .663,  .471}9.34e1 & 2.53e0 & \cellcolor[rgb]{ .855,  .878,  .506}7.81e1 & 3.13e-6 & \cellcolor[rgb]{ .973,  .412,  .42}1.00e2 & 2.36e0 & \cellcolor[rgb]{ .98,  .537,  .447}9.68e1 & 6.42e0 & \cellcolor[rgb]{ .98,  .514,  .439}9.74e1 & 6.05e0 & \cellcolor[rgb]{ .878,  .886,  .51}7.97e1 & 7.62e0 \\
          & 15    & \cellcolor[rgb]{ .91,  .894,  .51}2.80e3 & 1.23e2 & \cellcolor[rgb]{ .388,  .745,  .482}1.49e3 & 1.58e1 & \cellcolor[rgb]{ .882,  .886,  .51}2.74e3 & 1.02e3 & \cellcolor[rgb]{ .996,  .808,  .498}3.24e3 & 1.10e-11 & \cellcolor[rgb]{ .984,  .612,  .459}3.62e3 & 1.62e1 & \cellcolor[rgb]{ .973,  .412,  .42}4.01e3 & 5.26e2 & \cellcolor[rgb]{ .98,  .553,  .447}3.74e3 & 3.39e2 & \cellcolor[rgb]{ .894,  .89,  .51}2.77e3 & 2.46e2 \\
    \multirow{3}[0]{*}{F5} & 5     & \cellcolor[rgb]{ .984,  .914,  .514}1.96e0 & 9.55e-3 & \cellcolor[rgb]{ .98,  .537,  .447}2.28e0 & 1.10e0 & \cellcolor[rgb]{ .388,  .745,  .482}1.75e0 & 3.35e-2 & \cellcolor[rgb]{ 1,  .922,  .518}1.97e0 & 8.12e-1 & \cellcolor[rgb]{ .89,  .89,  .51}1.93e0 & 8.36e-3 & \cellcolor[rgb]{ .973,  .412,  .42}2.39e0 & 8.01e-1 & \cellcolor[rgb]{ .996,  .918,  .514}1.97e0 & 4.90e-3 & \cellcolor[rgb]{ .984,  .569,  .451}2.26e0 & 8.41e-1 \\
          & 10    & \cellcolor[rgb]{ .988,  .69,  .475}8.82e1 & 1.19e0 & \cellcolor[rgb]{ .616,  .808,  .494}6.04e1 & 1.99e1 & \cellcolor[rgb]{ .388,  .745,  .482}4.88e1 & 1.70e0 & \cellcolor[rgb]{ .988,  .69,  .475}8.82e1 & 9.68e-1 & \cellcolor[rgb]{ .973,  .91,  .514}7.80e1 & 1.03e0 & \cellcolor[rgb]{ .973,  .412,  .42}9.87e1 & 4.11e0 & \cellcolor[rgb]{ 1,  .886,  .514}8.08e1 & 4.10e0 & \cellcolor[rgb]{ .816,  .867,  .506}7.03e1 & 1.39e1 \\
          & 15    & \cellcolor[rgb]{ 1,  .914,  .518}2.35e3 & 3.07e2 & \cellcolor[rgb]{ .396,  .745,  .482}1.38e3 & 1.15e3 & \cellcolor[rgb]{ .388,  .745,  .482}1.37e3 & 5.75e1 & \cellcolor[rgb]{ 1,  .871,  .51}2.43e3 & 9.90e-1 & \cellcolor[rgb]{ .969,  .91,  .514}2.29e3 & 2.05e2 & \cellcolor[rgb]{ .973,  .412,  .42}3.31e3 & 4.44e2 & \cellcolor[rgb]{ .984,  .918,  .514}2.32e3 & 6.54e2 & \cellcolor[rgb]{ .996,  .816,  .498}2.54e3 & 7.97e2 \\
    \multirow{3}[0]{*}{F6} & 5     & \cellcolor[rgb]{ .388,  .745,  .482}1.82e-3 & 1.43e-4 & \cellcolor[rgb]{ .541,  .788,  .49}2.20e-3 & 6.87e-5 & \cellcolor[rgb]{ .498,  .776,  .486}2.10e-3 & 7.29e-6 & \cellcolor[rgb]{ .996,  .843,  .506}6.51e-3 & 1.29e-1 & \cellcolor[rgb]{ .624,  .812,  .494}2.40e-3 & 3.32e-4 & \cellcolor[rgb]{ 1,  .902,  .514}4.22e-3 & 5.63e-5 & \cellcolor[rgb]{ .98,  .553,  .447}1.82e-2 & 1.14e-2 & \cellcolor[rgb]{ .973,  .412,  .42}2.37e-2 & 4.98e-3 \\
          & 10    & \cellcolor[rgb]{ .4,  .745,  .482}2.75e-3 & 6.25e-4 & \cellcolor[rgb]{ .388,  .745,  .482}1.99e-3 & 4.35e-5 & \cellcolor[rgb]{ 1,  .851,  .506}4.57e-1 & 3.57e-1 & \cellcolor[rgb]{ .765,  .851,  .502}2.49e-2 & 9.91e-2 & \cellcolor[rgb]{ .918,  .894,  .51}3.40e-2 & 1.28e-4 & \cellcolor[rgb]{ 1,  .91,  .518}1.08e-1 & 1.40e-1 & \cellcolor[rgb]{ .973,  .412,  .42}3.02e0 & 4.22e0 & \cellcolor[rgb]{ 1,  .922,  .518}4.39e-2 & 4.22e-2 \\
          & 15    & \cellcolor[rgb]{ .388,  .745,  .482}1.03e-2 & 9.25e-3 & \cellcolor[rgb]{ .467,  .765,  .486}2.43e-2 & 8.09e-2 & \cellcolor[rgb]{ 1,  .902,  .514}7.46e-1 & 7.25e-3 & \cellcolor[rgb]{ .702,  .835,  .498}6.56e-2 & 9.24e-2 & \cellcolor[rgb]{ .569,  .796,  .49}4.24e-2 & 1.50e-2 & \cellcolor[rgb]{ 1,  .918,  .518}2.38e-1 & 1.25e-1 & \cellcolor[rgb]{ .973,  .412,  .42}1.55e1 & 1.37e1 & \cellcolor[rgb]{ 1,  .922,  .518}1.68e-1 & 1.91e-1 \\
    \multirow{3}[0]{*}{F7} & 5     & \cellcolor[rgb]{ .984,  .914,  .514}2.78e-1 & 3.25e-3 & \cellcolor[rgb]{ .996,  .827,  .502}2.89e-1 & 1.51e-1 & \cellcolor[rgb]{ .529,  .784,  .49}2.27e-1 & 4.63e-3 & \cellcolor[rgb]{ .996,  .804,  .498}2.91e-1 & 2.57e-1 & \cellcolor[rgb]{ .914,  .894,  .51}2.70e-1 & 6.11e-3 & \cellcolor[rgb]{ .973,  .412,  .42}3.29e-1 & 7.55e-3 & \cellcolor[rgb]{ 1,  .906,  .518}2.81e-1 & 1.68e-2 & \cellcolor[rgb]{ .388,  .745,  .482}2.11e-1 & 3.92e-3 \\
          & 10    & \cellcolor[rgb]{ .471,  .769,  .486}8.35e-1 & 1.03e-1 & \cellcolor[rgb]{ .627,  .812,  .494}8.49e-1 & 3.44e-3 & \cellcolor[rgb]{ .388,  .745,  .482}8.28e-1 & 5.58e-2 & \cellcolor[rgb]{ .973,  .412,  .42}1.60e0 & 1.61e-1 & \cellcolor[rgb]{ .675,  .827,  .498}8.53e-1 & 3.69e-2 & \cellcolor[rgb]{ .976,  .435,  .427}1.57e0 & 9.55e-2 & \cellcolor[rgb]{ .992,  .71,  .478}1.18e0 & 9.98e-2 & \cellcolor[rgb]{ 1,  .902,  .518}9.09e-1 & 1.36e-1 \\
          & 15    & \cellcolor[rgb]{ .573,  .796,  .49}1.80e0 & 3.40e-1 & \cellcolor[rgb]{ .388,  .745,  .482}1.64e0 & 4.62e-2 & \cellcolor[rgb]{ 1,  .922,  .518}2.16e0 & 1.77e-1 & \cellcolor[rgb]{ .973,  .412,  .42}1.48e1 & 1.91e-2 & \cellcolor[rgb]{ .996,  .918,  .514}2.16e0 & 1.67e-1 & \cellcolor[rgb]{ .996,  .847,  .506}4.06e0 & 6.97e-1 & \cellcolor[rgb]{ 1,  .882,  .514}3.17e0 & 4.49e-1 & \cellcolor[rgb]{ .537,  .784,  .49}1.77e0 & 4.43e-1 \\
    \multirow{3}[0]{*}{F8} & 5     & \cellcolor[rgb]{ .902,  .89,  .51}9.17e-2 & 4.32e-3 & \cellcolor[rgb]{ .6,  .804,  .494}8.11e-2 & 7.52e-3 & \cellcolor[rgb]{ .388,  .745,  .482}7.35e-2 & 6.34e-4 & \cellcolor[rgb]{ 1,  .914,  .518}9.86e-2 & 1.22e-1 & \cellcolor[rgb]{ .533,  .784,  .49}7.88e-2 & 2.65e-3 & \cellcolor[rgb]{ .996,  .824,  .502}1.29e-1 & 4.45e-3 & \cellcolor[rgb]{ .992,  .761,  .49}1.50e-1 & 1.74e-2 & \cellcolor[rgb]{ .973,  .412,  .42}2.67e-1 & 5.42e-2 \\
          & 10    & \cellcolor[rgb]{ .875,  .882,  .51}1.35e-1 & 4.99e-3 & \cellcolor[rgb]{ .682,  .827,  .498}1.25e-1 & 8.02e-3 & \cellcolor[rgb]{ .388,  .745,  .482}1.09e-1 & 5.52e-4 & \cellcolor[rgb]{ .996,  .827,  .502}2.98e-1 & 6.80e-3 & \cellcolor[rgb]{ 1,  .922,  .518}1.46e-1 & 4.02e-3 & \cellcolor[rgb]{ .918,  .898,  .51}1.37e-1 & 4.17e-3 & \cellcolor[rgb]{ .996,  .808,  .498}3.28e-1 & 6.06e-2 & \cellcolor[rgb]{ .973,  .412,  .42}9.74e-1 & 1.68e-1 \\
          & 15    & \cellcolor[rgb]{ 1,  .922,  .518}1.93e-1 & 1.87e-2 & \cellcolor[rgb]{ .675,  .827,  .498}1.60e-1 & 1.06e-2 & \cellcolor[rgb]{ .388,  .745,  .482}1.32e-1 & 7.36e-4 & \cellcolor[rgb]{ .992,  .757,  .486}6.05e-1 & 5.66e-5 & \cellcolor[rgb]{ .984,  .918,  .514}1.90e-1 & 9.46e-3 & \cellcolor[rgb]{ .804,  .863,  .506}1.73e-1 & 6.24e-3 & \cellcolor[rgb]{ 1,  .855,  .506}3.61e-1 & 6.02e-2 & \cellcolor[rgb]{ .973,  .412,  .42}1.45e0 & 3.20e-1 \\
    \multirow{3}[0]{*}{F9} & 5     & \cellcolor[rgb]{ .627,  .812,  .494}9.17e-2 & 6.35e-3 & \cellcolor[rgb]{ .71,  .835,  .498}9.62e-2 & 1.53e-2 & \cellcolor[rgb]{ .973,  .412,  .42}2.76e-1 & 9.57e-2 & \cellcolor[rgb]{ .659,  .824,  .498}9.34e-2 & 3.16e-1 & \cellcolor[rgb]{ .388,  .745,  .482}7.87e-2 & 5.90e-3 & \cellcolor[rgb]{ 1,  .875,  .51}1.27e-1 & 8.24e-3 & \cellcolor[rgb]{ .98,  .518,  .443}2.43e-1 & 1.13e-1 & \cellcolor[rgb]{ .988,  .69,  .475}1.87e-1 & 2.98e-2 \\
          & 10    & \cellcolor[rgb]{ .427,  .757,  .482}1.85e-1 & 1.27e-2 & \cellcolor[rgb]{ .502,  .776,  .486}1.98e-1 & 7.86e-2 & \cellcolor[rgb]{ .973,  .412,  .42}2.49e0 & 3.42e-2 & \cellcolor[rgb]{ .698,  .831,  .498}2.32e-1 & 1.35e-2 & \cellcolor[rgb]{ 1,  .91,  .518}3.38e-1 & 6.58e-2 & \cellcolor[rgb]{ .388,  .745,  .482}1.77e-1 & 8.20e-3 & \cellcolor[rgb]{ 1,  .851,  .506}5.98e-1 & 2.14e-1 & \cellcolor[rgb]{ .992,  .773,  .49}9.31e-1 & 2.16e-1 \\
          & 15    & \cellcolor[rgb]{ .518,  .78,  .486}2.25e-1 & 6.25e-2 & \cellcolor[rgb]{ .431,  .757,  .482}1.76e-1 & 1.21e-1 & \cellcolor[rgb]{ .973,  .412,  .42}3.34e0 & 5.50e0 & \cellcolor[rgb]{ 1,  .898,  .514}6.40e-1 & 2.67e-4 & \cellcolor[rgb]{ .737,  .843,  .502}3.48e-1 & 1.45e-1 & \cellcolor[rgb]{ .388,  .745,  .482}1.51e-1 & 5.40e-3 & \cellcolor[rgb]{ .98,  .537,  .447}2.65e0 & 4.63e0 & \cellcolor[rgb]{ .996,  .808,  .498}1.14e0 & 2.02e-1 \\
    \multirow{3}[0]{*}{F10} & 5     & \cellcolor[rgb]{ .529,  .784,  .49}3.87e-1 & 1.45e-2 & \cellcolor[rgb]{ .937,  .902,  .514}4.43e-1 & 1.82e-2 & \cellcolor[rgb]{ .388,  .745,  .482}3.67e-1 & 1.87e-3 & \cellcolor[rgb]{ .996,  .78,  .49}5.02e-1 & 9.44e-1 & \cellcolor[rgb]{ .431,  .757,  .482}3.73e-1 & 8.63e-3 & \cellcolor[rgb]{ 1,  .867,  .51}4.71e-1 & 1.09e-2 & \cellcolor[rgb]{ 1,  .902,  .514}4.59e-1 & 3.35e-2 & \cellcolor[rgb]{ .973,  .412,  .42}6.30e-1 & 7.66e-2 \\
          & 10    & \cellcolor[rgb]{ .714,  .839,  .498}1.02e0 & 2.98e-2 & \cellcolor[rgb]{ .973,  .412,  .42}1.40e0 & 5.19e-2 & \cellcolor[rgb]{ .388,  .745,  .482}9.66e-1 & 1.94e-2 & \cellcolor[rgb]{ .749,  .847,  .502}1.02e0 & 9.98e-1 & \cellcolor[rgb]{ .922,  .898,  .51}1.05e0 & 3.73e-2 & \cellcolor[rgb]{ .992,  .714,  .478}1.20e0 & 5.94e-2 & \cellcolor[rgb]{ 1,  .906,  .518}1.07e0 & 4.19e-2 & \cellcolor[rgb]{ .98,  .494,  .439}1.35e0 & 9.24e-2 \\
          & 15    & \cellcolor[rgb]{ .388,  .745,  .482}1.38e0 & 3.91e-2 & \cellcolor[rgb]{ .973,  .412,  .42}2.12e0 & 1.05e-1 & \cellcolor[rgb]{ .62,  .812,  .494}1.50e0 & 4.59e-2 & \cellcolor[rgb]{ .871,  .882,  .51}1.63e0 & 9.98e-1 & \cellcolor[rgb]{ .659,  .824,  .498}1.52e0 & 6.54e-2 & \cellcolor[rgb]{ .984,  .596,  .455}1.96e0 & 8.53e-2 & \cellcolor[rgb]{ .996,  .843,  .506}1.76e0 & 5.59e-1 & \cellcolor[rgb]{ .98,  .549,  .447}2.00e0 & 6.39e-2 \\
    \multirow{3}[0]{*}{F11} & 5     & \cellcolor[rgb]{ .388,  .745,  .482}3.89e-1 & 1.93e-3 & \cellcolor[rgb]{ .973,  .412,  .42}4.46e0 & 9.22e-3 & \cellcolor[rgb]{ .541,  .788,  .49}4.78e-1 & 9.35e-3 & \cellcolor[rgb]{ .855,  .878,  .506}6.58e-1 & 9.95e-1 & \cellcolor[rgb]{ .816,  .867,  .506}6.35e-1 & 1.26e-2 & \cellcolor[rgb]{ 1,  .914,  .518}8.25e-1 & 1.72e-2 & \cellcolor[rgb]{ 1,  .882,  .514}1.04e0 & 5.00e-1 & \cellcolor[rgb]{ .988,  .694,  .475}2.42e0 & 8.17e-1 \\
          & 10    & \cellcolor[rgb]{ .388,  .745,  .482}1.15e0 & 2.26e-2 & \cellcolor[rgb]{ .51,  .776,  .486}1.47e0 & 4.80e-2 & \cellcolor[rgb]{ .4,  .745,  .482}1.18e0 & 2.21e-2 & \cellcolor[rgb]{ .996,  .796,  .494}4.76e0 & 9.96e-1 & \cellcolor[rgb]{ 1,  .91,  .518}3.00e0 & 1.28e0 & \cellcolor[rgb]{ .914,  .894,  .51}2.56e0 & 4.39e-1 & \cellcolor[rgb]{ .992,  .733,  .482}5.70e0 & 6.26e-1 & \cellcolor[rgb]{ .973,  .412,  .42}1.06e1 & 2.20e0 \\
          & 15    & \cellcolor[rgb]{ .545,  .788,  .49}1.43e0 & 3.71e-2 & \cellcolor[rgb]{ .667,  .824,  .498}2.17e0 & 4.47e-2 & \cellcolor[rgb]{ .58,  .8,  .49}1.65e0 & 4.79e-2 & \cellcolor[rgb]{ .996,  .827,  .502}6.21e0 & 9.95e-1 & \cellcolor[rgb]{ .992,  .718,  .478}8.52e0 & 2.79e0 & \cellcolor[rgb]{ .388,  .745,  .482}4.68e-1 & 6.22e-1 & \cellcolor[rgb]{ .98,  .514,  .439}1.29e1 & 1.60e0 & \cellcolor[rgb]{ .973,  .412,  .42}1.50e1 & 2.36e0 \\
    \multirow{3}[0]{*}{F12} & 5     & \cellcolor[rgb]{ .757,  .851,  .502}9.36e-1 & 3.43e-3 & \cellcolor[rgb]{ 1,  .859,  .506}9.63e-1 & 9.78e-3 & \cellcolor[rgb]{ .827,  .871,  .506}9.37e-1 & 8.21e-3 & \cellcolor[rgb]{ .655,  .82,  .494}9.35e-1 & 7.76e-1 & \cellcolor[rgb]{ .388,  .745,  .482}9.33e-1 & 2.51e-3 & \cellcolor[rgb]{ .973,  .412,  .42}1.12e0 & 7.95e-3 & \cellcolor[rgb]{ 1,  .918,  .518}9.40e-1 & 8.33e-3 & \cellcolor[rgb]{ .996,  .843,  .506}9.68e-1 & 9.86e-3 \\
          & 10    & \cellcolor[rgb]{ .988,  .678,  .475}4.60e0 & 1.81e-2 & \cellcolor[rgb]{ .431,  .757,  .482}4.17e0 & 2.26e-2 & \cellcolor[rgb]{ .388,  .745,  .482}4.15e0 & 8.14e-3 & \cellcolor[rgb]{ .988,  .69,  .475}4.60e0 & 8.98e-1 & \cellcolor[rgb]{ .718,  .839,  .498}4.35e0 & 2.06e-2 & \cellcolor[rgb]{ .973,  .412,  .42}4.69e0 & 1.44e-2 & \cellcolor[rgb]{ .996,  .824,  .502}4.55e0 & 2.34e-1 & \cellcolor[rgb]{ .945,  .906,  .514}4.49e0 & 4.47e-2 \\
          & 15    & \cellcolor[rgb]{ 1,  .89,  .514}7.73e0 & 8.12e-2 & \cellcolor[rgb]{ .388,  .745,  .482}7.19e0 & 6.65e-2 & \cellcolor[rgb]{ .447,  .761,  .482}7.24e0 & 1.15e-1 & \cellcolor[rgb]{ .996,  .847,  .506}7.80e0 & 8.26e-1 & \cellcolor[rgb]{ .855,  .878,  .506}7.56e0 & 1.58e-1 & \cellcolor[rgb]{ .929,  .898,  .51}7.62e0 & 1.66e-1 & \cellcolor[rgb]{ .973,  .412,  .42}8.48e0 & 3.55e-1 & \cellcolor[rgb]{ .976,  .463,  .431}8.40e0 & 1.47e-1 \\
    \multirow{3}[0]{*}{F13} & 5     & \cellcolor[rgb]{ .514,  .78,  .486}8.83e-2 & 1.25e-2 & \cellcolor[rgb]{ .388,  .745,  .482}7.79e-2 & 1.27e-2 & \cellcolor[rgb]{ .969,  .91,  .514}1.25e-1 & 4.12e-2 & \cellcolor[rgb]{ 1,  .922,  .518}1.30e-1 & 2.61e-1 & \cellcolor[rgb]{ .906,  .894,  .51}1.20e-1 & 1.19e-2 & \cellcolor[rgb]{ 1,  .918,  .518}1.30e-1 & 6.74e-3 & \cellcolor[rgb]{ 1,  .867,  .51}1.63e-1 & 1.72e-2 & \cellcolor[rgb]{ .973,  .412,  .42}4.51e-1 & 9.00e-2 \\
          & 10    & \cellcolor[rgb]{ .494,  .773,  .486}1.12e-1 & 5.25e-2 & \cellcolor[rgb]{ .388,  .745,  .482}9.98e-2 & 6.79e-3 & \cellcolor[rgb]{ .58,  .8,  .49}1.22e-1 & 7.88e-3 & \cellcolor[rgb]{ .988,  .647,  .467}3.23e-1 & 1.32e-1 & \cellcolor[rgb]{ .996,  .839,  .502}2.17e-1 & 1.15e-2 & \cellcolor[rgb]{ .58,  .8,  .49}1.22e-1 & 7.21e-3 & \cellcolor[rgb]{ .996,  .816,  .498}2.30e-1 & 2.00e-2 & \cellcolor[rgb]{ .973,  .412,  .42}4.52e-1 & 9.87e-2 \\
          & 15    & \cellcolor[rgb]{ .533,  .784,  .49}1.43e-1 & 6.35e-2 & \cellcolor[rgb]{ .388,  .745,  .482}1.25e-1 & 1.18e-2 & \cellcolor[rgb]{ .459,  .765,  .486}1.34e-1 & 7.74e-3 & \cellcolor[rgb]{ .988,  .639,  .467}4.43e-1 & 5.24e-2 & \cellcolor[rgb]{ 1,  .867,  .51}2.49e-1 & 3.90e-2 & \cellcolor[rgb]{ .588,  .8,  .49}1.50e-1 & 9.25e-3 & \cellcolor[rgb]{ 1,  .863,  .51}2.51e-1 & 3.13e-2 & \cellcolor[rgb]{ .973,  .412,  .42}6.38e-1 & 7.86e-2 \\
    \multirow{3}[0]{*}{F14} & 5     & \cellcolor[rgb]{ .882,  .886,  .51}3.42e-1 & 2.47e-2 & \cellcolor[rgb]{ .388,  .745,  .482}2.26e-1 & 4.65e-2 & \cellcolor[rgb]{ .988,  .686,  .475}5.30e-1 & 7.32e-2 & \cellcolor[rgb]{ .929,  .898,  .51}3.52e-1 & 6.01e-1 & \cellcolor[rgb]{ .918,  .898,  .51}3.50e-1 & 2.28e-2 & \cellcolor[rgb]{ 1,  .898,  .514}3.85e-1 & 4.24e-2 & \cellcolor[rgb]{ .973,  .412,  .42}7.14e-1 & 2.36e-1 & \cellcolor[rgb]{ .988,  .69,  .475}5.26e-1 & 8.67e-2 \\
          & 10    & \cellcolor[rgb]{ .388,  .745,  .482}5.49e-1 & 7.73e-2 & \cellcolor[rgb]{ 1,  .922,  .518}8.49e-1 & 1.69e-1 & \cellcolor[rgb]{ .973,  .412,  .42}2.45e1 & 3.53e1 & \cellcolor[rgb]{ .945,  .906,  .514}6.95e-1 & 5.24e-1 & \cellcolor[rgb]{ .404,  .749,  .482}5.54e-1 & 5.09e-2 & \cellcolor[rgb]{ .875,  .882,  .51}6.76e-1 & 8.35e-2 & \cellcolor[rgb]{ 1,  .89,  .514}2.34e0 & 1.31e0 & \cellcolor[rgb]{ 1,  .922,  .518}7.22e-1 & 5.71e-2 \\
          & 15    & \cellcolor[rgb]{ .439,  .757,  .482}6.22e-1 & 1.43e-1 & \cellcolor[rgb]{ 1,  .914,  .518}1.12e0 & 1.45e-1 & \cellcolor[rgb]{ .973,  .412,  .42}1.80e1 & 1.60e1 & \cellcolor[rgb]{ .753,  .851,  .502}7.28e-1 & 4.39e-1 & \cellcolor[rgb]{ .431,  .757,  .482}6.19e-1 & 1.33e-1 & \cellcolor[rgb]{ .388,  .745,  .482}6.04e-1 & 6.46e-2 & \cellcolor[rgb]{ 1,  .91,  .518}1.32e0 & 2.61e-1 & \cellcolor[rgb]{ 1,  .922,  .518}8.92e-1 & 1.30e-1 \\
    \multirow{3}[1]{*}{F15} & 5     & \cellcolor[rgb]{ .545,  .788,  .49}3.06e-1 & 4.33e-2 & \cellcolor[rgb]{ .388,  .745,  .482}2.33e-1 & 5.41e-2 & \cellcolor[rgb]{ .976,  .459,  .431}9.52e-1 & 4.41e-2 & \cellcolor[rgb]{ .792,  .859,  .502}4.21e-1 & 4.28e-2 & \cellcolor[rgb]{ .678,  .827,  .498}3.68e-1 & 4.02e-2 & \cellcolor[rgb]{ .996,  .82,  .498}6.13e-1 & 2.57e-2 & \cellcolor[rgb]{ .973,  .412,  .42}9.94e-1 & 9.40e-2 & \cellcolor[rgb]{ .996,  .816,  .498}6.17e-1 & 4.06e-2 \\
          & 10    & \cellcolor[rgb]{ .592,  .804,  .494}8.99e-1 & 9.81e-2 & \cellcolor[rgb]{ 1,  .922,  .518}1.03e0 & 2.05e-1 & \cellcolor[rgb]{ .973,  .412,  .42}9.19e0 & 1.35e1 & \cellcolor[rgb]{ 1,  .922,  .518}9.97e-1 & 4.43e-7 & \cellcolor[rgb]{ .969,  .91,  .514}9.84e-1 & 4.72e-2 & \cellcolor[rgb]{ .388,  .745,  .482}8.52e-1 & 5.04e-2 & \cellcolor[rgb]{ 1,  .89,  .514}1.53e0 & 2.67e-1 & \cellcolor[rgb]{ .957,  .91,  .514}9.81e-1 & 6.80e-2 \\
          & 15    & \cellcolor[rgb]{ .388,  .745,  .482}1.02e0 & 1.35e-1 & \cellcolor[rgb]{ .996,  .8,  .498}1.93e0 & 4.95e-1 & \cellcolor[rgb]{ 1,  .902,  .514}1.42e0 & 9.17e-2 & \cellcolor[rgb]{ .647,  .82,  .494}1.14e0 & 5.14e-11 & \cellcolor[rgb]{ .431,  .757,  .482}1.04e0 & 4.51e-2 & \cellcolor[rgb]{ 1,  .914,  .518}1.35e0 & 7.91e-2 & \cellcolor[rgb]{ .973,  .412,  .42}3.89e0 & 2.03e0 & \cellcolor[rgb]{ .906,  .894,  .51}1.27e0 & 5.40e-2 \\
    \midrule
    \multirow{6}[2]{*}{Friedman} & 5     & \multicolumn{2}{c}{\cellcolor[rgb]{ .365,  .549,  .78}2.57} & \multicolumn{2}{c}{\cellcolor[rgb]{ .729,  .804,  .906}3.53} & \multicolumn{2}{c}{\cellcolor[rgb]{ .843,  .886,  .949}3.83} & \multicolumn{2}{c}{\cellcolor[rgb]{ .988,  .886,  .898}4.60} & \multicolumn{2}{c}{\cellcolor[rgb]{ .353,  .541,  .776}\textbf{2.53}} & \multicolumn{2}{c}{\cellcolor[rgb]{ .976,  .424,  .431}6.30} & \multicolumn{2}{c}{\cellcolor[rgb]{ .976,  .424,  .431}6.30} & \multicolumn{2}{c}{\cellcolor[rgb]{ .973,  .412,  .42}6.33} \\
          & 10    & \multicolumn{2}{c}{\cellcolor[rgb]{ .353,  .541,  .776}\textbf{2.50}} & \multicolumn{2}{c}{\cellcolor[rgb]{ .639,  .745,  .878}3.37} & \multicolumn{2}{c}{\cellcolor[rgb]{ .776,  .839,  .925}3.77} & \multicolumn{2}{c}{\cellcolor[rgb]{ .984,  .769,  .78}5.17} & \multicolumn{2}{c}{\cellcolor[rgb]{ .831,  .875,  .941}3.93} & \multicolumn{2}{c}{\cellcolor[rgb]{ .988,  .855,  .867}4.87} & \multicolumn{2}{c}{\cellcolor[rgb]{ .973,  .412,  .42}6.40} & \multicolumn{2}{c}{\cellcolor[rgb]{ .976,  .529,  .537}6.00} \\
          & 15    & \multicolumn{2}{c}{\cellcolor[rgb]{ .506,  .647,  .827}\textbf{2.80}} & \multicolumn{2}{c}{\cellcolor[rgb]{ .698,  .784,  .898}3.40} & \multicolumn{2}{c}{\cellcolor[rgb]{ .988,  .988,  1}4.30} & \multicolumn{2}{c}{\cellcolor[rgb]{ .984,  .741,  .753}5.20} & \multicolumn{2}{c}{\cellcolor[rgb]{ .667,  .761,  .886}3.30} & \multicolumn{2}{c}{\cellcolor[rgb]{ .988,  .91,  .918}4.60} & \multicolumn{2}{c}{\cellcolor[rgb]{ .973,  .412,  .42}6.40} & \multicolumn{2}{c}{\cellcolor[rgb]{ .976,  .525,  .533}6.00} \\
          & partial & \multicolumn{2}{c}{\cellcolor[rgb]{ .353,  .541,  .776}\textbf{2.31}} & \multicolumn{2}{c}{\cellcolor[rgb]{ .502,  .647,  .827}2.87} & \multicolumn{2}{c}{\cellcolor[rgb]{ .886,  .914,  .961}4.28} & \multicolumn{2}{c}{\cellcolor[rgb]{ .984,  .761,  .773}5.39} & \multicolumn{2}{c}{\cellcolor[rgb]{ .694,  .78,  .894}3.57} & \multicolumn{2}{c}{\cellcolor[rgb]{ .988,  .875,  .886}5.02} & \multicolumn{2}{c}{\cellcolor[rgb]{ .973,  .412,  .42}6.52} & \multicolumn{2}{c}{\cellcolor[rgb]{ .98,  .561,  .573}6.04} \\
          & full  & \multicolumn{2}{c}{\cellcolor[rgb]{ .353,  .541,  .776}\textbf{2.97}} & \multicolumn{2}{c}{\cellcolor[rgb]{ .988,  .961,  .973}4.39} & \multicolumn{2}{c}{\cellcolor[rgb]{ .714,  .792,  .902}3.72} & \multicolumn{2}{c}{\cellcolor[rgb]{ .941,  .953,  .98}4.19} & \multicolumn{2}{c}{\cellcolor[rgb]{ .353,  .541,  .776}\textbf{2.97}} & \multicolumn{2}{c}{\cellcolor[rgb]{ .98,  .584,  .592}5.61} & \multicolumn{2}{c}{\cellcolor[rgb]{ .976,  .475,  .482}5.97} & \multicolumn{2}{c}{\cellcolor[rgb]{ .973,  .412,  .42}6.17} \\
          & overall & \multicolumn{2}{c}{\cellcolor[rgb]{ .353,  .541,  .776}\textbf{2.62}} & \multicolumn{2}{c}{\cellcolor[rgb]{ .627,  .733,  .871}3.43} & \multicolumn{2}{c}{\cellcolor[rgb]{ .812,  .863,  .937}3.97} & \multicolumn{2}{c}{\cellcolor[rgb]{ .984,  .835,  .843}4.99} & \multicolumn{2}{c}{\cellcolor[rgb]{ .569,  .69,  .851}3.26} & \multicolumn{2}{c}{\cellcolor[rgb]{ .984,  .753,  .765}5.26} & \multicolumn{2}{c}{\cellcolor[rgb]{ .973,  .412,  .42}6.37} & \multicolumn{2}{c}{\cellcolor[rgb]{ .976,  .49,  .502}6.11} \\
    \midrule
    \multirow{6}[2]{*}{t-test} & 5     & \multicolumn{2}{c}{\multirow{6}[2]{*}{~}} & \multicolumn{2}{c}{5/3/7} & \multicolumn{2}{c}{\textbf{8/2/5}} & \multicolumn{2}{c}{\textbf{3/12/0}} & \multicolumn{2}{c}{\textbf{7/2/6}} & \multicolumn{2}{c}{\textbf{15/0/0}} & \multicolumn{2}{c}{\textbf{13/2/0}} & \multicolumn{2}{c}{\textbf{13/1/1}} \\
          & 10    & \multicolumn{2}{c}{} & \multicolumn{2}{c}{\textbf{7/3/5}} & \multicolumn{2}{c}{\textbf{8/3/4}} & \multicolumn{2}{c}{\textbf{9/5/1}} & \multicolumn{2}{c}{\textbf{11/1/3}} & \multicolumn{2}{c}{\textbf{10/5/0}} & \multicolumn{2}{c}{\textbf{11/3/1}} & \multicolumn{2}{c}{\textbf{11/2/2}} \\
          & 15    & \multicolumn{2}{c}{} & \multicolumn{2}{c}{\textbf{8/1/6}} & \multicolumn{2}{c}{\textbf{10/2/3}} & \multicolumn{2}{c}{\textbf{10/4/1}} & \multicolumn{2}{c}{\textbf{8/5/2}} & \multicolumn{2}{c}{\textbf{8/3/4}} & \multicolumn{2}{c}{\textbf{14/1/0}} & \multicolumn{2}{c}{\textbf{11/3/1}} \\
          & partial & \multicolumn{2}{c}{} & \multicolumn{2}{c}{8/11/8} & \multicolumn{2}{c}{\textbf{16/6/5}} & \multicolumn{2}{c}{\textbf{20/7/0}} & \multicolumn{2}{c}{\textbf{18/5/4}} & \multicolumn{2}{c}{\textbf{20/4/3}} & \multicolumn{2}{c}{\textbf{24/3/0}} & \multicolumn{2}{c}{\textbf{23/2/2}} \\
          & full  & \multicolumn{2}{c}{} & \multicolumn{2}{c}{\textbf{11/1/6}} & \multicolumn{2}{c}{\textbf{11/0/7}} & \multicolumn{2}{c}{\textbf{4/14/0}} & \multicolumn{2}{c}{0/18/0} & \multicolumn{2}{c}{\textbf{13/3/2}} & \multicolumn{2}{c}{\textbf{12/5/1}} & \multicolumn{2}{c}{\textbf{14/2/2}} \\
          & overall & \multicolumn{2}{c}{} & \multicolumn{2}{c}{\textbf{19/9/17}} & \multicolumn{2}{c}{\textbf{26/7/12}} & \multicolumn{2}{c}{\textbf{22/23/0}} & \multicolumn{2}{c}{\textbf{23/10/12}} & \multicolumn{2}{c}{\textbf{34/6/5}} & \multicolumn{2}{c}{\textbf{40/3/2}} & \multicolumn{2}{c}{\textbf{38/3/4}} \\
    \bottomrule
    \bottomrule
	\multicolumn{18}{m{0.6\textwidth}}{For Friedman test, $\alpha = 0.05$, $p \ll \alpha$; For $t$-test, $\alpha = 0.05$. The results of the $t$-tests are provided as ``$l/u/g$'', where $l$ represents the mean value of the proposed algorithm, for a certain test case, is significantly \emph{less} than the compared algorithm, $g$ represents a \emph{greater} result, and $u$ represents that it is \emph{uncertain} to say whether the mean value of the proposed algorithm \algoabbr{} is greater or less than the compared algorithm.} \\
    \end{tabular}%
  \label{tab:IGD}%
\end{table*}%

\par
To observe the performance of \algoabbr{} more intuitively, we provide the comparison graph between the obtained PF with median IGD among the independent runs and the ground truth PF on $4$ test cases with partial FOS. We can see from these figures that, apart from the noise caused by evolution, \algoabbr{} obtains intuitively satisfying performance.
\begin{figure*}
\centering
\subfloat[MaF1, $M = 5$]{
\captionsetup{justification = centering}
\includegraphics[width=0.45\textwidth]{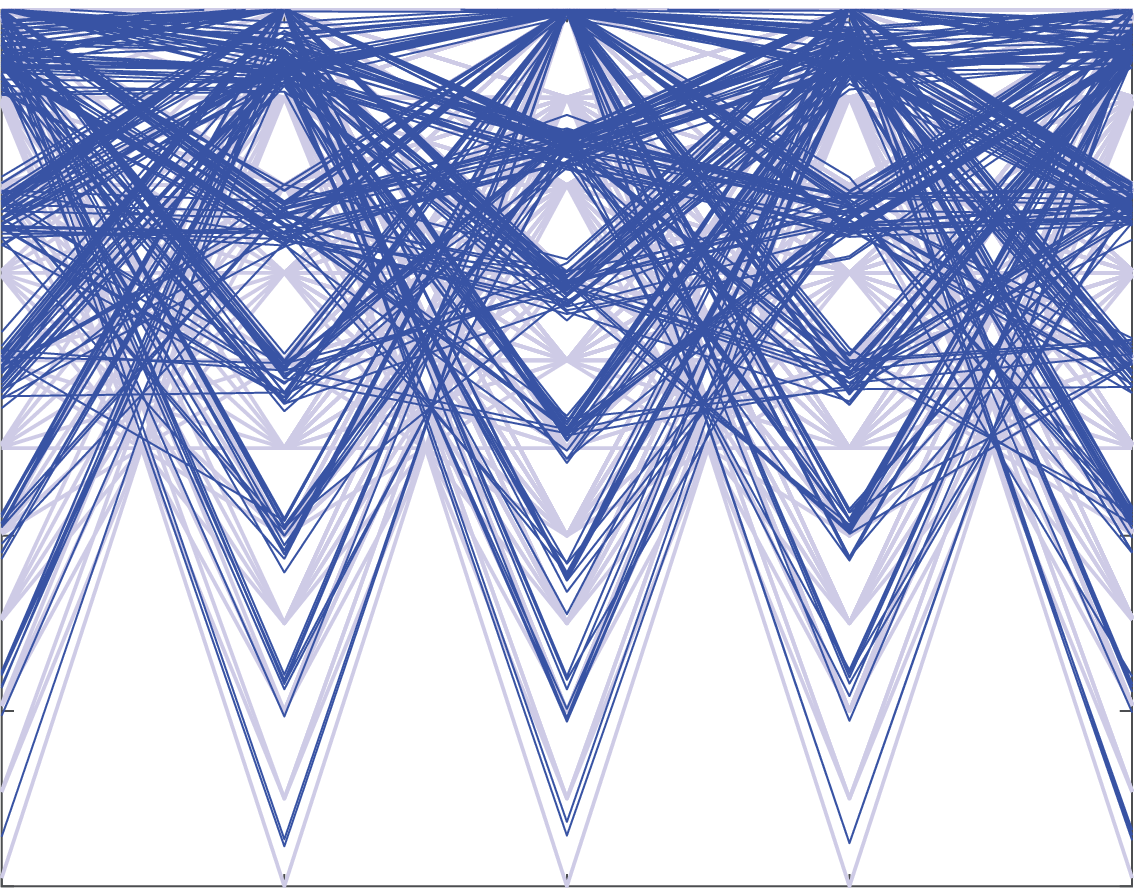}}
\hfill
\subfloat[MaF2, $M = 5$]{
\captionsetup{justification = centering}
\includegraphics[width=0.45\textwidth]{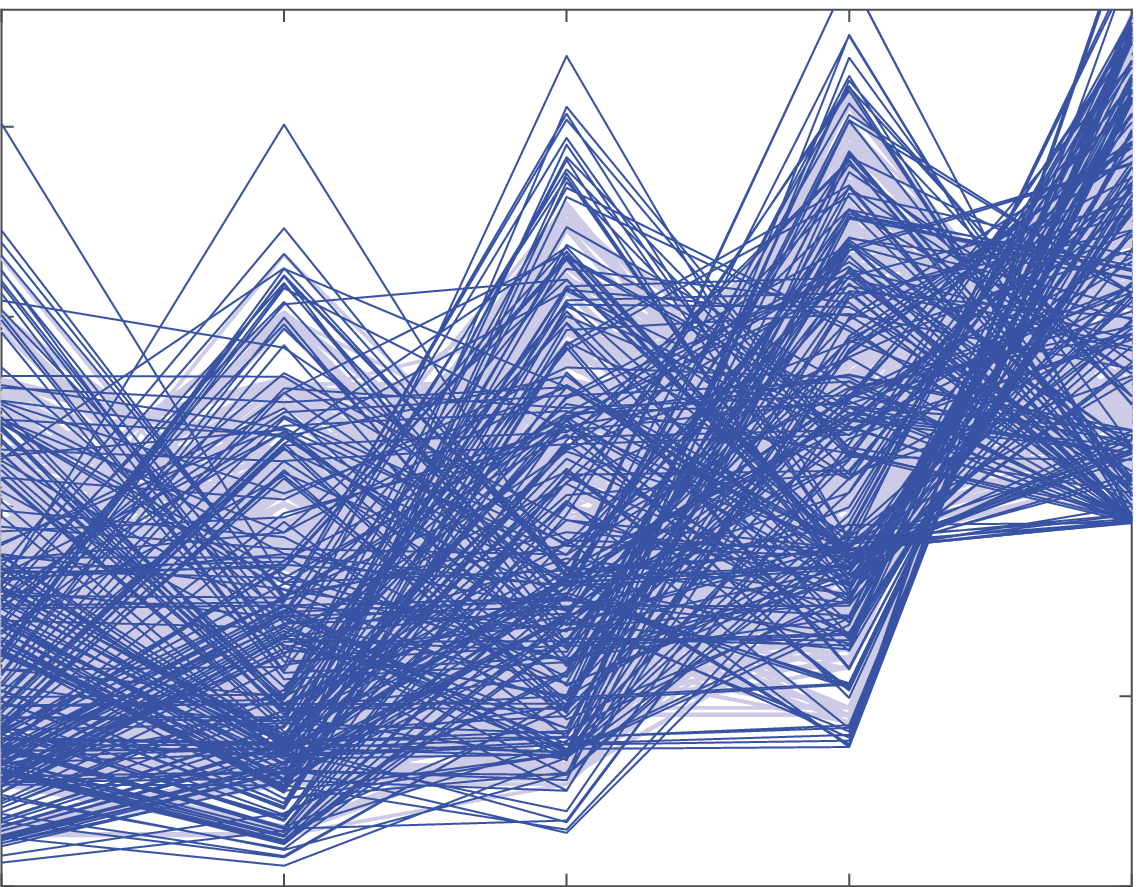}}

\subfloat[MaF6, $M = 5$]{
\captionsetup{justification = centering}
\includegraphics[width=0.45\textwidth]{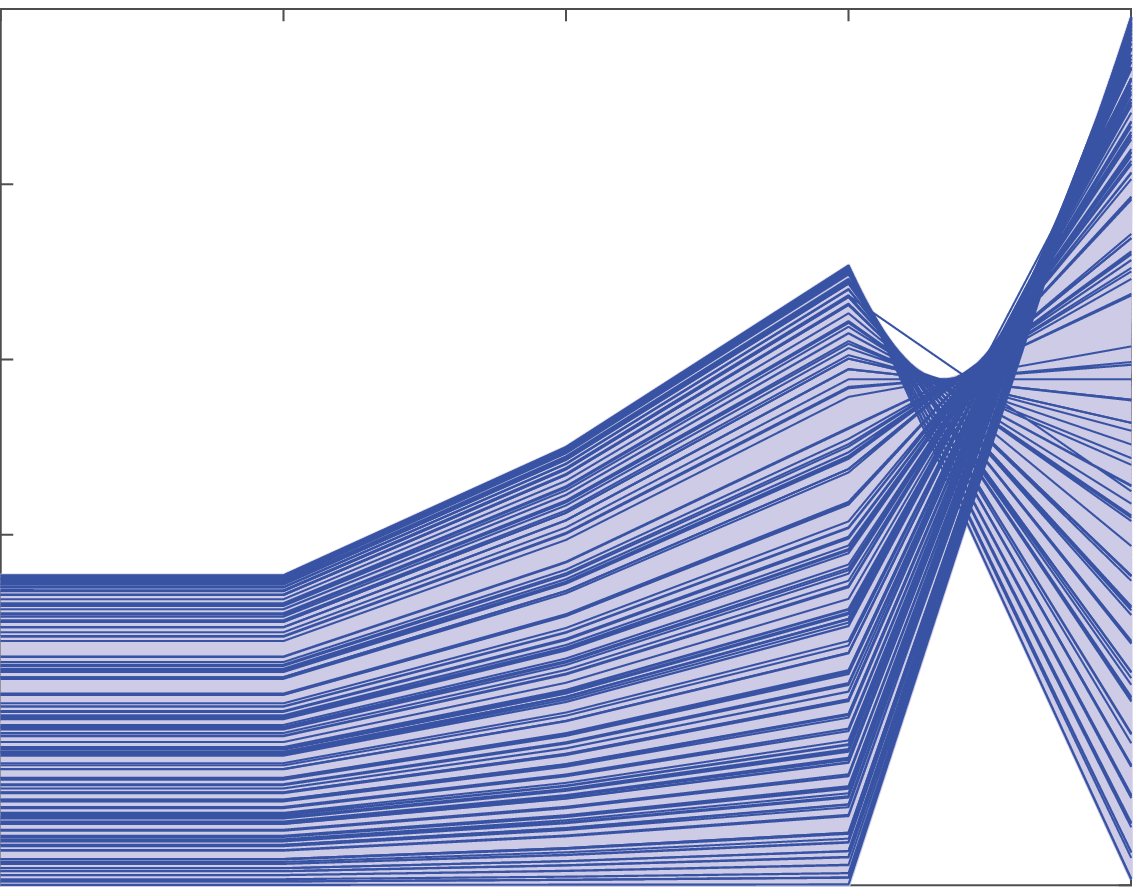}}
\hfill
\subfloat[MaF13, $M = 5$]{
\captionsetup{justification = centering}
\includegraphics[width=0.45\textwidth]{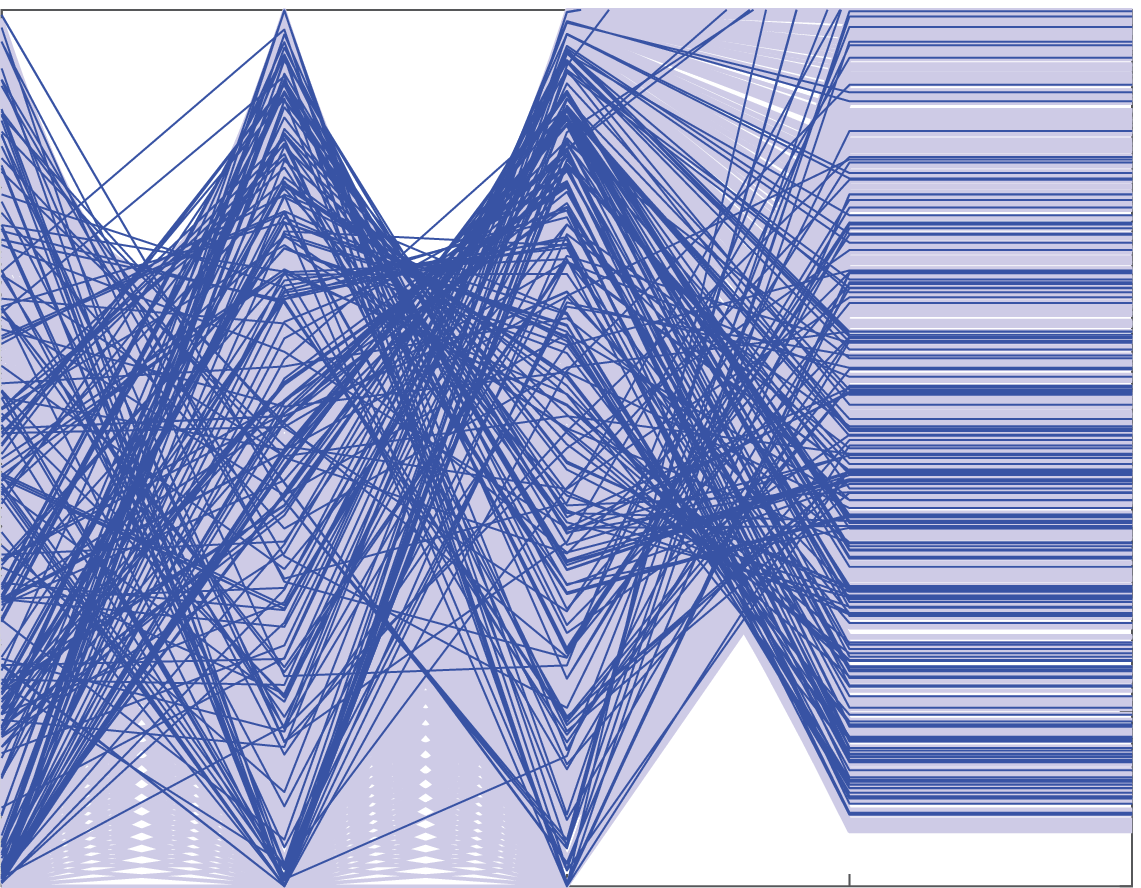}}
\caption{Visualization of the obtained PF with median IGD: for comparison, the true PF is also plotted in the parallel coordinates with lighter color (as the background).}
\label{fig:cover}
\end{figure*}
\par
To specify the performance on each type of algorithms, we offer two dichotomies for the test cases, first of which is based on the number of objectives and second is based on the feasibility of the FOS. We conduct Friedman tests and $t$-tests on the partitioned sets of the test cases and present the statistics in Tab. \ref{tab:IGD}. Also, to intuitively compare the performance, we convert the results of the Friedman test into a radar diagram presented in Fig. \ref{fig:radar}. From the categorical observations we find that
\begin{enumerate}
\item \algoabbr{} has competitive performance within all partitions of test cases, ranking at least the 2nd among the state-of-the-art algorithms;
\item \algoabbr{} has leading performance in test cases with partial FOS. This shows its effectiveness.
\item \algoabbr{} and CLIA achives highest performance on the test cases with full FOS, this shows the effectiveness of the selection operator cascade clustering;
\item \algoabbr{} has arguably similar performance (from $t$-test) when compared to CVEA3, which strongly shows the competitiveness of its performance.
\end{enumerate}
\begin{figure*}
\centering
\captionsetup{justification = centering}
\includegraphics[width=0.45\textwidth]{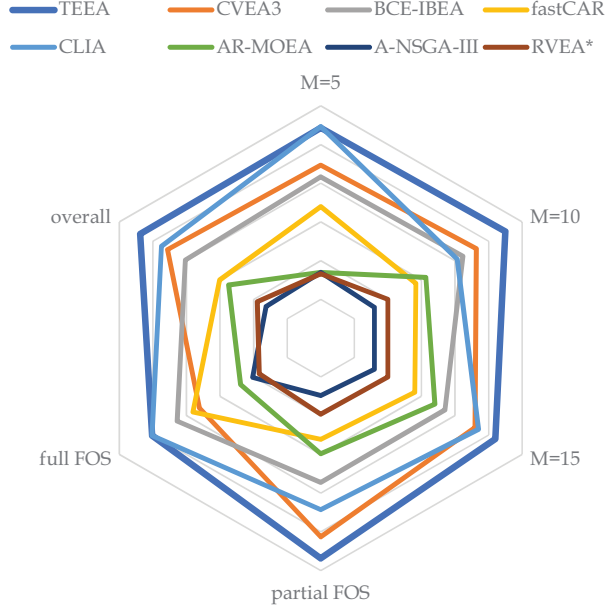}
\caption{Radar visualization of the performance on different types of test cases: the wider the distribution of the included areas, the better the performance. For each dimension of the radar diagram, the values are $|A| - r_F$, where $|A| = 8$ is the number of compared algorithms and $r_F$ is the mean rank given by the Friedman tests.}
\label{fig:radar}
\end{figure*}
\subsection{Comparative Tests on Difficult Problem with Infeasible Parts in Objective Spaces}
To validate the effectiveness of the main contribution of this work, we extend the dimensionality of the solution space of the MaF15 problem into $D \in \{100, 200, 300, 400, 500\}$ for $M \in \{5, 10\}$, constituting $10$ extremely difficult test cases. These test cases simulate the scenarios in which the population hardly reaches vicinity of the true PF and the objective space is with infeasible parts. We compare \algoabbr{} with the reference vector based opponents with adaptation, including fastCAR \cite{zhao2018fast}, CLIA \cite{ge2018interacting}, AR-MOEA \cite{tian2017indicator}, A-NSGA-III \cite{jain2014evolutionary} and RVEA* \cite{cheng2016reference}. The results are given in Tab. \ref{tab:difficult}.
\begin{table*}[htbp]
\setlength{\tabcolsep}{1.5pt}
\scriptsize
  \centering
  \caption{Results on A Difficult Problem with Infeasible Parts in the Objective Space}
    \begin{tabular}{cccccccccccc}
    \toprule
    \toprule
    \multicolumn{2}{c}{MaF15} & \multicolumn{2}{c}{TEEA} & \multicolumn{2}{c}{fastCAR} & \multicolumn{2}{c}{CLIA} & \multicolumn{2}{c}{A-NSGA-III} & \multicolumn{2}{c}{RVEA*} \\
    M     & D     & Mean  & Std   & Mean  & Std   & Mean  & Std   & Mean  & Std   & Mean  & Std \\
    \midrule
    \multirow{5}[1]{*}{5} & 100   & \cellcolor[rgb]{ .388,  .745,  .482}3.06e-1 & 3.25e-3 & \cellcolor[rgb]{ 1,  .922,  .518}4.21e-1 & 4.28e-2 & \cellcolor[rgb]{ .714,  .839,  .498}3.68e-1 & 4.02e-2 & \cellcolor[rgb]{ .973,  .412,  .42}9.94e-1 & 9.40e-2 & \cellcolor[rgb]{ .992,  .749,  .486}6.17e-1 & 4.06e-2 \\
          & 200   & \cellcolor[rgb]{ .388,  .745,  .482}3.08e-1 & 2.05e-2 & \cellcolor[rgb]{ 1,  .922,  .518}4.65e-1 & 7.68e-2 & \cellcolor[rgb]{ .624,  .812,  .494}3.69e-1 & 4.03e-2 & \cellcolor[rgb]{ .973,  .412,  .42}9.22e-1 & 8.51e-3 & \cellcolor[rgb]{ .992,  .714,  .478}6.54e-1 & 3.05e-2 \\
          & 300   & \cellcolor[rgb]{ .388,  .745,  .482}3.08e-1 & 1.85e-2 & \cellcolor[rgb]{ 1,  .922,  .518}4.77e-1 & 8.25e-2 & \cellcolor[rgb]{ .608,  .808,  .494}3.69e-1 & 5.79e-2 & \cellcolor[rgb]{ .973,  .412,  .42}8.75e-1 & 5.44e-2 & \cellcolor[rgb]{ .988,  .69,  .475}6.60e-1 & 2.48e-2 \\
          & 400   & \cellcolor[rgb]{ .388,  .745,  .482}3.20e-1 & 1.43e-2 & \cellcolor[rgb]{ 1,  .922,  .518}4.78e-1 & 6.35e-2 & \cellcolor[rgb]{ .588,  .8,  .49}3.72e-1 & 2.25e-2 & \cellcolor[rgb]{ .973,  .412,  .42}8.88e-1 & 6.45e-2 & \cellcolor[rgb]{ .988,  .682,  .475}6.71e-1 & 2.94e-2 \\
          & 500   & \cellcolor[rgb]{ .388,  .745,  .482}3.28e-1 & 7.65e-2 & \cellcolor[rgb]{ 1,  .922,  .518}4.90e-1 & 4.52e-2 & \cellcolor[rgb]{ .569,  .796,  .49}3.76e-1 & 5.85e-2 & \cellcolor[rgb]{ .973,  .412,  .42}8.93e-1 & 5.33e-2 & \cellcolor[rgb]{ .992,  .733,  .482}6.40e-1 & 3.45e-2 \\
    \multirow{5}[1]{*}{10} & 100   & \cellcolor[rgb]{ .388,  .745,  .482}8.85e-1 & 3.25e-2 & \cellcolor[rgb]{ 1,  .89,  .514}9.98e-1 & 3.85e-7 & \cellcolor[rgb]{ 1,  .922,  .518}9.67e-1 & 3.25e-2 & \cellcolor[rgb]{ .973,  .412,  .42}1.43e0 & 2.13e-1 & \cellcolor[rgb]{ .737,  .843,  .502}9.32e-1 & 7.25e-2 \\
          & 200   & \cellcolor[rgb]{ .388,  .745,  .482}9.02e-1 & 4.67e-2 & \cellcolor[rgb]{ 1,  .91,  .518}9.97e-1 & 4.43e-7 & \cellcolor[rgb]{ 1,  .922,  .518}9.84e-1 & 4.72e-2 & \cellcolor[rgb]{ .973,  .412,  .42}1.53e0 & 2.67e-1 & \cellcolor[rgb]{ .976,  .914,  .514}9.81e-1 & 6.80e-2 \\
          & 300   & \cellcolor[rgb]{ .388,  .745,  .482}9.06e-1 & 4.32e-2 & \cellcolor[rgb]{ 1,  .898,  .514}1.03e0 & 1.33e-2 & \cellcolor[rgb]{ 1,  .922,  .518}9.95e-1 & 6.85e-2 & \cellcolor[rgb]{ .973,  .412,  .42}1.65e0 & 3.57e-1 & \cellcolor[rgb]{ .678,  .827,  .498}9.49e-1 & 1.21e-1 \\
          & 400   & \cellcolor[rgb]{ .388,  .745,  .482}9.10e-1 & 5.25e-2 & \cellcolor[rgb]{ .992,  .718,  .478}1.25e0 & 2.25e-1 & \cellcolor[rgb]{ 1,  .922,  .518}1.13e0 & 8.25e-2 & \cellcolor[rgb]{ .973,  .412,  .42}1.43e0 & 3.65e-1 & \cellcolor[rgb]{ .722,  .839,  .498}1.03e0 & 1.33e-1 \\
          & 500   & \cellcolor[rgb]{ .388,  .745,  .482}9.18e-1 & 8.33e-2 & \cellcolor[rgb]{ .984,  .6,  .459}1.44e0 & 3.45e-1 & \cellcolor[rgb]{ 1,  .922,  .518}1.25e0 & 1.37e-1 & \cellcolor[rgb]{ .973,  .412,  .42}1.55e0 & 4.45e-1 & \cellcolor[rgb]{ .98,  .914,  .514}1.24e0 & 2.37e-1 \\
    \bottomrule
    \bottomrule
	\multicolumn{12}{m{0.5\textwidth}}{\tiny For each test case, $\text{FE} = D \times 10^{4}$. Results averaged from $20$ indepedent runs on each test case.}
    \end{tabular}%
  \label{tab:difficult}%
\end{table*}%

\par
It can be observed that \algoabbr{} validates its own effectiveness by achieving the best performance with an overwhelming margin.
\subsection{Stability Comparison using Confidence Interval Integration}
Stability is of great importance for optimization algorithms. We analyze the stability of \algoabbr{} by comparing it with those of the state-of-the-art algorithms. The evaluation criterion for stability is the scaled approximated size of area of the logarithmically-scaled confidence intervals. Let $\tau = \langle \bm{m}, \bm{b}_l, \bm{b}_u \rangle$ be the sampled statistical information (at time $t \in {1, \dots, T}$) of the IGD of a certain algorithm on a certain problem, where $\bm{m}$ is the mean IGD of independent runs at the sample times as well as $\bm{b}_l$ and $\bm{b}_u$ be the lowerbound and upperbound of the confidence interval calculated from the IGD values of the independent runs at the sample times, the stability criterion $V(\tau)$ is computed as:
$$V(\tau) = \sum_{i = 1}^{T}{\log{b_u^{(i)}} - \log{b_l^{(i)}}}$$
Note that the criterion requires the sample of points are the same, which we set to $101$ points per test case. The smaller the criterion, the better the stability (less variance). The results of such criterion on all test cases is provided in Tab. \ref{tab:stability}.
\begin{table*}[htbp]
\setlength{\tabcolsep}{1.5pt}
\scriptsize
  \centering
  \caption{Confidence Interval Integration Results for Stability}
    \begin{tabular}{cccccccccc}
    \toprule
    \toprule
    Problem & $M$     & \algoabbr{}  & \tiny CVEA3 & \tiny BCE-IBEA & \tiny fastCAR & \tiny CLIA  & \tiny AR-MOEA & \tiny A-NSGA-III & \tiny RVEA* \\
    \midrule
    \multirow{3}[1]{*}{F1} & 5     & \cellcolor[rgb]{ .98,  .529,  .443}7.66e+1 & \cellcolor[rgb]{ .435,  .757,  .482}8.48e0 & \cellcolor[rgb]{ 1,  .875,  .51}3.54e+1 & \cellcolor[rgb]{ .388,  .745,  .482}6.61e0 & \cellcolor[rgb]{ .98,  .529,  .443}7.65e+1 & \cellcolor[rgb]{ .839,  .875,  .506}2.36e+1 & \cellcolor[rgb]{ .608,  .808,  .494}1.48e+1 & \cellcolor[rgb]{ .973,  .412,  .42}9.06e+1 \\
          & 10    & \cellcolor[rgb]{ .973,  .412,  .42}8.51e+1 & \cellcolor[rgb]{ .996,  .8,  .494}5.76e+1 & \cellcolor[rgb]{ .976,  .482,  .435}8.02e+1 & \cellcolor[rgb]{ .992,  .741,  .486}6.17e+1 & \cellcolor[rgb]{ .875,  .882,  .51}3.98e+1 & \cellcolor[rgb]{ .467,  .765,  .486}9.97e0 & \cellcolor[rgb]{ .388,  .745,  .482}4.25e0 & \cellcolor[rgb]{ .839,  .875,  .506}3.73e+1 \\
          & 15    & \cellcolor[rgb]{ .976,  .424,  .424}8.58e+1 & \cellcolor[rgb]{ 1,  .871,  .51}5.37e+1 & \cellcolor[rgb]{ .933,  .902,  .514}4.56e+1 & \cellcolor[rgb]{ .973,  .412,  .42}8.65e+1 & \cellcolor[rgb]{ .835,  .875,  .506}3.92e+1 & \cellcolor[rgb]{ .388,  .745,  .482}9.31e0 & \cellcolor[rgb]{ .988,  .635,  .463}7.05e+1 & \cellcolor[rgb]{ .941,  .902,  .514}4.62e+1 \\
    \multirow{3}[0]{*}{F2} & 5     & \cellcolor[rgb]{ .973,  .412,  .42}4.20e+1 & \cellcolor[rgb]{ 1,  .863,  .51}2.61e+1 & \cellcolor[rgb]{ .514,  .78,  .486}5.85e0 & \cellcolor[rgb]{ 1,  .918,  .518}2.41e+1 & \cellcolor[rgb]{ .922,  .898,  .51}2.10e+1 & \cellcolor[rgb]{ .388,  .745,  .482}1.06e0 & \cellcolor[rgb]{ .992,  .918,  .514}2.37e+1 & \cellcolor[rgb]{ 1,  .855,  .506}2.64e+1 \\
          & 10    & \cellcolor[rgb]{ 1,  .851,  .506}4.45e+1 & \cellcolor[rgb]{ .988,  .682,  .475}5.72e+1 & \cellcolor[rgb]{ .988,  .667,  .471}5.83e+1 & \cellcolor[rgb]{ .78,  .855,  .502}2.75e+1 & \cellcolor[rgb]{ .612,  .808,  .494}1.87e+1 & \cellcolor[rgb]{ .973,  .412,  .42}7.71e+1 & \cellcolor[rgb]{ .898,  .89,  .51}3.38e+1 & \cellcolor[rgb]{ .388,  .745,  .482}6.65e0 \\
          & 15    & \cellcolor[rgb]{ .988,  .663,  .471}5.47e+1 & \cellcolor[rgb]{ .98,  .914,  .514}3.01e+1 & \cellcolor[rgb]{ .973,  .412,  .42}7.72e+1 & \cellcolor[rgb]{ .51,  .78,  .486}6.79e0 & \cellcolor[rgb]{ 1,  .902,  .518}3.28e+1 & \cellcolor[rgb]{ .388,  .745,  .482}6.95e-1 & \cellcolor[rgb]{ .584,  .8,  .49}1.05e+1 & \cellcolor[rgb]{ 1,  .914,  .518}3.20e+1 \\
    \multirow{3}[0]{*}{F3} & 5     & \cellcolor[rgb]{ .749,  .847,  .502}2.44e+1 & \cellcolor[rgb]{ .988,  .643,  .467}7.18e+1 & \cellcolor[rgb]{ .996,  .792,  .494}5.41e+1 & \cellcolor[rgb]{ .973,  .412,  .42}9.99e+1 & \cellcolor[rgb]{ .388,  .745,  .482}4.58e0 & \cellcolor[rgb]{ .678,  .827,  .498}2.05e+1 & \cellcolor[rgb]{ .925,  .898,  .51}3.40e+1 & \cellcolor[rgb]{ 1,  .89,  .514}4.19e+1 \\
          & 10    & \cellcolor[rgb]{ .937,  .902,  .514}3.28e+1 & \cellcolor[rgb]{ .388,  .745,  .482}6.34e0 & \cellcolor[rgb]{ 1,  .867,  .51}4.30e+1 & \cellcolor[rgb]{ .973,  .412,  .42}9.98e+1 & \cellcolor[rgb]{ 1,  .898,  .514}3.88e+1 & \cellcolor[rgb]{ .396,  .745,  .482}6.84e0 & \cellcolor[rgb]{ .8,  .863,  .506}2.62e+1 & \cellcolor[rgb]{ .98,  .533,  .443}8.45e+1 \\
          & 15    & \cellcolor[rgb]{ .388,  .745,  .482}8.71e0 & \cellcolor[rgb]{ .898,  .89,  .51}5.34e+1 & \cellcolor[rgb]{ .392,  .745,  .482}9.12e0 & \cellcolor[rgb]{ .973,  .412,  .42}9.47e+1 & \cellcolor[rgb]{ .992,  .745,  .486}7.35e+1 & \cellcolor[rgb]{ .486,  .773,  .486}1.76e+1 & \cellcolor[rgb]{ .996,  .788,  .494}7.07e+1 & \cellcolor[rgb]{ .988,  .655,  .467}7.91e+1 \\
    \multirow{3}[0]{*}{F4} & 5     & \cellcolor[rgb]{ .878,  .886,  .51}4.90e+1 & \cellcolor[rgb]{ .667,  .824,  .498}2.84e+1 & \cellcolor[rgb]{ .388,  .745,  .482}1.26e0 & \cellcolor[rgb]{ .973,  .412,  .42}9.31e+1 & \cellcolor[rgb]{ .992,  .737,  .482}7.24e+1 & \cellcolor[rgb]{ .976,  .459,  .431}9.01e+1 & \cellcolor[rgb]{ .518,  .78,  .486}1.40e+1 & \cellcolor[rgb]{ .976,  .439,  .427}9.14e+1 \\
          & 10    & \cellcolor[rgb]{ .533,  .784,  .49}3.08e+1 & \cellcolor[rgb]{ .388,  .745,  .482}2.74e+1 & \cellcolor[rgb]{ .788,  .859,  .502}3.66e+1 & \cellcolor[rgb]{ .996,  .843,  .506}4.63e+1 & \cellcolor[rgb]{ .545,  .788,  .49}3.10e+1 & \cellcolor[rgb]{ .976,  .482,  .435}6.85e+1 & \cellcolor[rgb]{ .98,  .506,  .439}6.69e+1 & \cellcolor[rgb]{ .973,  .412,  .42}7.26e+1 \\
          & 15    & \cellcolor[rgb]{ .608,  .808,  .494}7.42e0 & \cellcolor[rgb]{ .973,  .914,  .514}1.83e+1 & \cellcolor[rgb]{ .388,  .745,  .482}7.31e-1 & \cellcolor[rgb]{ .494,  .776,  .486}4.01e0 & \cellcolor[rgb]{ 1,  .914,  .518}1.99e+1 & \cellcolor[rgb]{ .973,  .412,  .42}6.37e+1 & \cellcolor[rgb]{ .984,  .58,  .455}4.90e+1 & \cellcolor[rgb]{ .992,  .706,  .478}3.82e+1 \\
    \multirow{3}[0]{*}{F5} & 5     & \cellcolor[rgb]{ .988,  .698,  .478}8.52e+1 & \cellcolor[rgb]{ .388,  .745,  .482}4.02e0 & \cellcolor[rgb]{ .769,  .855,  .502}5.19e+1 & \cellcolor[rgb]{ .98,  .506,  .439}8.92e+1 & \cellcolor[rgb]{ .961,  .91,  .514}7.60e+1 & \cellcolor[rgb]{ .973,  .412,  .42}9.12e+1 & \cellcolor[rgb]{ .882,  .886,  .51}6.62e+1 & \cellcolor[rgb]{ .98,  .502,  .439}8.93e+1 \\
          & 10    & \cellcolor[rgb]{ .541,  .788,  .49}7.61e0 & \cellcolor[rgb]{ 1,  .898,  .514}3.00e+1 & \cellcolor[rgb]{ .918,  .894,  .51}2.32e+1 & \cellcolor[rgb]{ .973,  .412,  .42}9.05e+1 & \cellcolor[rgb]{ .388,  .745,  .482}1.10e0 & \cellcolor[rgb]{ .992,  .714,  .478}5.29e+1 & \cellcolor[rgb]{ .988,  .675,  .471}5.80e+1 & \cellcolor[rgb]{ .694,  .831,  .498}1.40e+1 \\
          & 15    & \cellcolor[rgb]{ .745,  .847,  .502}3.97e+1 & \cellcolor[rgb]{ .388,  .745,  .482}5.21e0 & \cellcolor[rgb]{ .996,  .843,  .506}6.87e+1 & \cellcolor[rgb]{ .973,  .412,  .42}9.41e+1 & \cellcolor[rgb]{ .651,  .82,  .494}3.04e+1 & \cellcolor[rgb]{ .949,  .906,  .514}5.91e+1 & \cellcolor[rgb]{ .992,  .714,  .478}7.63e+1 & \cellcolor[rgb]{ .988,  .675,  .471}7.87e+1 \\
    \multirow{3}[0]{*}{F6} & 5     & \cellcolor[rgb]{ .502,  .776,  .486}1.42e+1 & \cellcolor[rgb]{ .973,  .412,  .42}7.22e+1 & \cellcolor[rgb]{ .976,  .424,  .424}7.19e+1 & \cellcolor[rgb]{ .455,  .765,  .486}1.04e+1 & \cellcolor[rgb]{ .957,  .91,  .514}5.13e+1 & \cellcolor[rgb]{ .976,  .443,  .427}7.12e+1 & \cellcolor[rgb]{ .388,  .745,  .482}4.92e0 & \cellcolor[rgb]{ .996,  .831,  .502}5.76e+1 \\
          & 10    & \cellcolor[rgb]{ .988,  .69,  .475}7.82e+1 & \cellcolor[rgb]{ 1,  .902,  .514}6.13e+1 & \cellcolor[rgb]{ .925,  .898,  .51}5.35e+1 & \cellcolor[rgb]{ .973,  .412,  .42}1.00e2 & \cellcolor[rgb]{ .388,  .745,  .482}9.62e0 & \cellcolor[rgb]{ .447,  .761,  .482}1.47e+1 & \cellcolor[rgb]{ 1,  .894,  .514}6.20e+1 & \cellcolor[rgb]{ .976,  .914,  .514}5.79e+1 \\
          & 15    & \cellcolor[rgb]{ .98,  .557,  .451}7.83e+1 & \cellcolor[rgb]{ .98,  .51,  .439}8.18e+1 & \cellcolor[rgb]{ .98,  .494,  .435}8.30e+1 & \cellcolor[rgb]{ .973,  .412,  .42}8.90e+1 & \cellcolor[rgb]{ .502,  .776,  .486}1.73e+1 & \cellcolor[rgb]{ .388,  .745,  .482}9.30e0 & \cellcolor[rgb]{ .431,  .757,  .482}1.23e+1 & \cellcolor[rgb]{ .604,  .804,  .494}2.41e+1 \\
    \multirow{3}[0]{*}{F7} & 5     & \cellcolor[rgb]{ .949,  .906,  .514}4.63e+1 & \cellcolor[rgb]{ .388,  .745,  .482}1.54e+1 & \cellcolor[rgb]{ .984,  .631,  .463}6.54e+1 & \cellcolor[rgb]{ .843,  .875,  .506}4.04e+1 & \cellcolor[rgb]{ .976,  .459,  .431}7.49e+1 & \cellcolor[rgb]{ .973,  .412,  .42}7.75e+1 & \cellcolor[rgb]{ .498,  .776,  .486}2.16e+1 & \cellcolor[rgb]{ 1,  .878,  .51}5.15e+1 \\
          & 10    & \cellcolor[rgb]{ .388,  .745,  .482}1.23e0 & \cellcolor[rgb]{ .937,  .902,  .514}4.44e+1 & \cellcolor[rgb]{ .992,  .773,  .49}6.19e+1 & \cellcolor[rgb]{ .631,  .812,  .494}2.04e+1 & \cellcolor[rgb]{ 1,  .867,  .51}5.43e+1 & \cellcolor[rgb]{ .984,  .565,  .451}7.95e+1 & \cellcolor[rgb]{ .973,  .412,  .42}9.23e+1 & \cellcolor[rgb]{ .537,  .788,  .49}1.31e+1 \\
          & 15    & \cellcolor[rgb]{ .988,  .918,  .514}5.34e+1 & \cellcolor[rgb]{ .992,  .725,  .482}6.39e+1 & \cellcolor[rgb]{ .471,  .769,  .486}2.42e+1 & \cellcolor[rgb]{ .455,  .761,  .482}2.31e+1 & \cellcolor[rgb]{ .388,  .745,  .482}1.93e+1 & \cellcolor[rgb]{ .973,  .412,  .42}7.98e+1 & \cellcolor[rgb]{ 1,  .914,  .518}5.43e+1 & \cellcolor[rgb]{ 1,  .871,  .51}5.65e+1 \\
    \multirow{3}[0]{*}{F8} & 5     & \cellcolor[rgb]{ .957,  .906,  .514}5.29e+1 & \cellcolor[rgb]{ .973,  .412,  .42}7.62e+1 & \cellcolor[rgb]{ .98,  .541,  .447}7.12e+1 & \cellcolor[rgb]{ .388,  .745,  .482}7.84e0 & \cellcolor[rgb]{ .804,  .863,  .506}4.07e+1 & \cellcolor[rgb]{ .996,  .843,  .502}5.94e+1 & \cellcolor[rgb]{ .592,  .804,  .494}2.41e+1 & \cellcolor[rgb]{ .98,  .514,  .439}7.23e+1 \\
          & 10    & \cellcolor[rgb]{ .976,  .914,  .514}6.51e+1 & \cellcolor[rgb]{ .973,  .412,  .42}8.74e+1 & \cellcolor[rgb]{ .992,  .737,  .482}7.43e+1 & \cellcolor[rgb]{ .988,  .702,  .478}7.57e+1 & \cellcolor[rgb]{ .847,  .878,  .506}5.57e+1 & \cellcolor[rgb]{ .863,  .882,  .51}5.67e+1 & \cellcolor[rgb]{ 1,  .882,  .514}6.84e+1 & \cellcolor[rgb]{ .388,  .745,  .482}2.13e+1 \\
          & 15    & \cellcolor[rgb]{ .58,  .8,  .49}2.28e+1 & \cellcolor[rgb]{ .388,  .745,  .482}2.14e0 & \cellcolor[rgb]{ .988,  .651,  .467}7.96e+1 & \cellcolor[rgb]{ .957,  .906,  .514}6.27e+1 & \cellcolor[rgb]{ .973,  .412,  .42}9.02e+1 & \cellcolor[rgb]{ .996,  .824,  .502}7.18e+1 & \cellcolor[rgb]{ .992,  .733,  .482}7.58e+1 & \cellcolor[rgb]{ .784,  .859,  .502}4.45e+1 \\
    \multirow{3}[0]{*}{F9} & 5     & \cellcolor[rgb]{ .984,  .627,  .463}7.72e+1 & \cellcolor[rgb]{ .533,  .784,  .49}1.75e+1 & \cellcolor[rgb]{ .973,  .412,  .42}9.07e+1 & \cellcolor[rgb]{ .835,  .875,  .506}4.41e+1 & \cellcolor[rgb]{ .992,  .718,  .478}7.15e+1 & \cellcolor[rgb]{ .984,  .616,  .459}7.80e+1 & \cellcolor[rgb]{ .388,  .745,  .482}4.29e0 & \cellcolor[rgb]{ .855,  .878,  .506}4.57e+1 \\
          & 10    & \cellcolor[rgb]{ .388,  .745,  .482}9.62e0 & \cellcolor[rgb]{ .98,  .498,  .439}7.49e+1 & \cellcolor[rgb]{ .996,  .788,  .494}5.32e+1 & \cellcolor[rgb]{ .439,  .757,  .482}1.25e+1 & \cellcolor[rgb]{ .976,  .486,  .435}7.56e+1 & \cellcolor[rgb]{ .973,  .412,  .42}8.10e+1 & \cellcolor[rgb]{ .816,  .867,  .506}3.33e+1 & \cellcolor[rgb]{ .788,  .859,  .502}3.16e+1 \\
          & 15    & \cellcolor[rgb]{ .98,  .533,  .443}6.77e+1 & \cellcolor[rgb]{ .388,  .745,  .482}7.34e0 & \cellcolor[rgb]{ .973,  .412,  .42}7.32e+1 & \cellcolor[rgb]{ .89,  .89,  .51}4.22e+1 & \cellcolor[rgb]{ .506,  .776,  .486}1.56e+1 & \cellcolor[rgb]{ .98,  .494,  .435}6.95e+1 & \cellcolor[rgb]{ .992,  .765,  .49}5.69e+1 & \cellcolor[rgb]{ .667,  .824,  .498}2.66e+1 \\
    \multirow{3}[0]{*}{F10} & 5     & \cellcolor[rgb]{ .604,  .804,  .494}1.48e+1 & \cellcolor[rgb]{ .718,  .839,  .498}2.10e+1 & \cellcolor[rgb]{ .741,  .847,  .502}2.23e+1 & \cellcolor[rgb]{ .973,  .412,  .42}7.93e+1 & \cellcolor[rgb]{ .976,  .447,  .427}7.66e+1 & \cellcolor[rgb]{ .388,  .745,  .482}3.40e0 & \cellcolor[rgb]{ .992,  .765,  .49}4.94e+1 & \cellcolor[rgb]{ .98,  .49,  .435}7.28e+1 \\
          & 10    & \cellcolor[rgb]{ .702,  .835,  .498}4.39e+1 & \cellcolor[rgb]{ 1,  .906,  .518}6.24e+1 & \cellcolor[rgb]{ .388,  .745,  .482}2.53e+1 & \cellcolor[rgb]{ .973,  .412,  .42}9.00e+1 & \cellcolor[rgb]{ .871,  .882,  .51}5.39e+1 & \cellcolor[rgb]{ .988,  .682,  .475}7.50e+1 & \cellcolor[rgb]{ .98,  .914,  .514}6.06e+1 & \cellcolor[rgb]{ .98,  .529,  .443}8.35e+1 \\
          & 15    & \cellcolor[rgb]{ .82,  .867,  .506}5.32e+1 & \cellcolor[rgb]{ .388,  .745,  .482}1.80e0 & \cellcolor[rgb]{ .878,  .886,  .51}6.00e+1 & \cellcolor[rgb]{ .973,  .412,  .42}8.44e+1 & \cellcolor[rgb]{ 1,  .878,  .51}7.52e+1 & \cellcolor[rgb]{ .996,  .788,  .494}7.70e+1 & \cellcolor[rgb]{ .992,  .918,  .514}7.33e+1 & \cellcolor[rgb]{ .988,  .682,  .475}7.90e+1 \\
    \multirow{3}[0]{*}{F11} & 5     & \cellcolor[rgb]{ .553,  .792,  .49}2.58e+1 & \cellcolor[rgb]{ .988,  .651,  .467}7.88e+1 & \cellcolor[rgb]{ .973,  .412,  .42}8.80e+1 & \cellcolor[rgb]{ .984,  .569,  .451}8.19e+1 & \cellcolor[rgb]{ .388,  .745,  .482}1.01e+1 & \cellcolor[rgb]{ .494,  .773,  .486}2.04e+1 & \cellcolor[rgb]{ .886,  .886,  .51}5.74e+1 & \cellcolor[rgb]{ .98,  .522,  .443}8.38e+1 \\
          & 10    & \cellcolor[rgb]{ .788,  .859,  .502}3.32e+1 & \cellcolor[rgb]{ .988,  .667,  .471}6.84e+1 & \cellcolor[rgb]{ .702,  .835,  .498}2.81e+1 & \cellcolor[rgb]{ .973,  .412,  .42}9.06e+1 & \cellcolor[rgb]{ .388,  .745,  .482}9.57e0 & \cellcolor[rgb]{ .996,  .784,  .494}5.79e+1 & \cellcolor[rgb]{ .98,  .557,  .447}7.81e+1 & \cellcolor[rgb]{ .733,  .843,  .502}3.01e+1 \\
          & 15    & \cellcolor[rgb]{ .945,  .906,  .514}5.18e+1 & \cellcolor[rgb]{ 1,  .878,  .51}5.81e+1 & \cellcolor[rgb]{ 1,  .871,  .51}5.87e+1 & \cellcolor[rgb]{ .973,  .412,  .42}9.04e+1 & \cellcolor[rgb]{ .729,  .843,  .502}3.83e+1 & \cellcolor[rgb]{ .984,  .58,  .455}7.89e+1 & \cellcolor[rgb]{ .388,  .745,  .482}1.73e+1 & \cellcolor[rgb]{ .639,  .816,  .494}3.30e+1 \\
    \multirow{3}[0]{*}{F12} & 5     & \cellcolor[rgb]{ .631,  .816,  .494}5.18e+1 & \cellcolor[rgb]{ .98,  .522,  .443}9.13e+1 & \cellcolor[rgb]{ 1,  .878,  .51}8.10e+1 & \cellcolor[rgb]{ .996,  .808,  .498}8.30e+1 & \cellcolor[rgb]{ .388,  .745,  .482}3.31e+1 & \cellcolor[rgb]{ .965,  .91,  .514}7.71e+1 & \cellcolor[rgb]{ .98,  .914,  .514}7.84e+1 & \cellcolor[rgb]{ .973,  .412,  .42}9.43e+1 \\
          & 10    & \cellcolor[rgb]{ .737,  .843,  .502}2.49e+1 & \cellcolor[rgb]{ .973,  .914,  .514}3.15e+1 & \cellcolor[rgb]{ .973,  .412,  .42}8.68e+1 & \cellcolor[rgb]{ .976,  .42,  .424}8.62e+1 & \cellcolor[rgb]{ .933,  .902,  .514}3.04e+1 & \cellcolor[rgb]{ .388,  .745,  .482}1.50e+1 & \cellcolor[rgb]{ 1,  .918,  .518}3.29e+1 & \cellcolor[rgb]{ .988,  .694,  .475}5.70e+1 \\
          & 15    & \cellcolor[rgb]{ .973,  .412,  .42}9.06e+1 & \cellcolor[rgb]{ .984,  .584,  .455}7.20e+1 & \cellcolor[rgb]{ .388,  .745,  .482}5.79e0 & \cellcolor[rgb]{ .98,  .529,  .443}7.80e+1 & \cellcolor[rgb]{ .624,  .812,  .494}1.69e+1 & \cellcolor[rgb]{ .714,  .839,  .498}2.10e+1 & \cellcolor[rgb]{ .996,  .804,  .498}4.74e+1 & \cellcolor[rgb]{ .6,  .804,  .494}1.58e+1 \\
    \multirow{3}[0]{*}{F13} & 5     & \cellcolor[rgb]{ .467,  .765,  .486}9.41e0 & \cellcolor[rgb]{ .467,  .765,  .486}9.46e0 & \cellcolor[rgb]{ .988,  .678,  .475}5.91e+1 & \cellcolor[rgb]{ 1,  .871,  .51}3.51e+1 & \cellcolor[rgb]{ .388,  .745,  .482}6.57e0 & \cellcolor[rgb]{ .98,  .529,  .443}7.79e+1 & \cellcolor[rgb]{ .816,  .867,  .506}2.20e+1 & \cellcolor[rgb]{ .973,  .412,  .42}9.25e+1 \\
          & 10    & \cellcolor[rgb]{ .965,  .91,  .514}6.34e+1 & \cellcolor[rgb]{ .996,  .784,  .494}7.22e+1 & \cellcolor[rgb]{ .973,  .412,  .42}8.76e+1 & \cellcolor[rgb]{ .435,  .757,  .482}1.06e+1 & \cellcolor[rgb]{ .388,  .745,  .482}5.89e0 & \cellcolor[rgb]{ .996,  .847,  .506}6.97e+1 & \cellcolor[rgb]{ .608,  .808,  .494}2.79e+1 & \cellcolor[rgb]{ .976,  .475,  .431}8.51e+1 \\
          & 15    & \cellcolor[rgb]{ 1,  .863,  .51}7.31e+1 & \cellcolor[rgb]{ .388,  .745,  .482}6.98e0 & \cellcolor[rgb]{ .988,  .694,  .475}7.69e+1 & \cellcolor[rgb]{ .984,  .918,  .514}7.05e+1 & \cellcolor[rgb]{ .773,  .855,  .502}4.81e+1 & \cellcolor[rgb]{ .973,  .412,  .42}8.30e+1 & \cellcolor[rgb]{ .718,  .839,  .498}4.21e+1 & \cellcolor[rgb]{ .992,  .733,  .482}7.60e+1 \\
    \multirow{3}[0]{*}{F14} & 5     & \cellcolor[rgb]{ .545,  .788,  .49}3.79e+1 & \cellcolor[rgb]{ .992,  .761,  .486}6.72e+1 & \cellcolor[rgb]{ .976,  .431,  .424}8.25e+1 & \cellcolor[rgb]{ .996,  .843,  .506}6.34e+1 & \cellcolor[rgb]{ .388,  .745,  .482}3.04e+1 & \cellcolor[rgb]{ .918,  .898,  .51}5.58e+1 & \cellcolor[rgb]{ .506,  .776,  .486}3.60e+1 & \cellcolor[rgb]{ .973,  .412,  .42}8.34e+1 \\
          & 10    & \cellcolor[rgb]{ .973,  .412,  .42}8.89e+1 & \cellcolor[rgb]{ .51,  .78,  .486}2.15e+1 & \cellcolor[rgb]{ .804,  .863,  .506}4.91e+1 & \cellcolor[rgb]{ .996,  .918,  .514}6.71e+1 & \cellcolor[rgb]{ 1,  .875,  .51}6.94e+1 & \cellcolor[rgb]{ .98,  .498,  .439}8.53e+1 & \cellcolor[rgb]{ .388,  .745,  .482}9.88e0 & \cellcolor[rgb]{ 1,  .922,  .518}6.74e+1 \\
          & 15    & \cellcolor[rgb]{ .773,  .855,  .502}1.51e+1 & \cellcolor[rgb]{ .741,  .843,  .502}1.47e+1 & \cellcolor[rgb]{ 1,  .902,  .514}2.00e+1 & \cellcolor[rgb]{ .98,  .518,  .443}6.38e+1 & \cellcolor[rgb]{ .388,  .745,  .482}1.08e+1 & \cellcolor[rgb]{ .973,  .412,  .42}7.57e+1 & \cellcolor[rgb]{ .996,  .78,  .494}3.37e+1 & \cellcolor[rgb]{ .392,  .745,  .482}1.09e+1 \\
    \multirow{3}[1]{*}{F15} & 5     & \cellcolor[rgb]{ .941,  .902,  .514}5.78e+1 & \cellcolor[rgb]{ .992,  .722,  .482}7.15e+1 & \cellcolor[rgb]{ .976,  .914,  .514}5.93e+1 & \cellcolor[rgb]{ 1,  .863,  .506}6.37e+1 & \cellcolor[rgb]{ .882,  .886,  .51}5.52e+1 & \cellcolor[rgb]{ .388,  .745,  .482}3.36e+1 & \cellcolor[rgb]{ .973,  .412,  .42}8.84e+1 & \cellcolor[rgb]{ 1,  .906,  .518}6.11e+1 \\
          & 10    & \cellcolor[rgb]{ .973,  .412,  .42}9.75e+1 & \cellcolor[rgb]{ .604,  .804,  .494}2.78e+1 & \cellcolor[rgb]{ .388,  .745,  .482}1.09e+1 & \cellcolor[rgb]{ .957,  .91,  .514}5.57e+1 & \cellcolor[rgb]{ 1,  .882,  .51}6.19e+1 & \cellcolor[rgb]{ .996,  .835,  .502}6.54e+1 & \cellcolor[rgb]{ .796,  .863,  .506}4.29e+1 & \cellcolor[rgb]{ .992,  .714,  .478}7.47e+1 \\
          & 15    & \cellcolor[rgb]{ .388,  .745,  .482}1.14e+1 & \cellcolor[rgb]{ 1,  .918,  .518}5.97e+1 & \cellcolor[rgb]{ .973,  .412,  .42}8.79e+1 & \cellcolor[rgb]{ .992,  .737,  .482}6.98e+1 & \cellcolor[rgb]{ .961,  .91,  .514}5.66e+1 & \cellcolor[rgb]{ .976,  .443,  .427}8.63e+1 & \cellcolor[rgb]{ .565,  .796,  .49}2.55e+1 & \cellcolor[rgb]{ .996,  .918,  .514}5.92e+1 \\
    \midrule
    \multicolumn{2}{c}{Friedman} & \cellcolor[rgb]{ .729,  .804,  .906}4.07 & \cellcolor[rgb]{ .804,  .859,  .933}4.20 & \cellcolor[rgb]{ .984,  .714,  .722}4.93 & \cellcolor[rgb]{ .973,  .412,  .42}5.38 & \cellcolor[rgb]{ .353,  .541,  .776}3.40 & \cellcolor[rgb]{ .984,  .773,  .784}4.84 & \cellcolor[rgb]{ .702,  .788,  .898}4.02 & \cellcolor[rgb]{ .98,  .565,  .573}5.16 \\
    \bottomrule
    \bottomrule
	\multicolumn{10}{m{0.4\textwidth}}{For Friedman test, $\alpha = 0.05$, $p = 4.20 \times 10^{-3}$ and $\chi^2 = 20.74$.} \\
    \end{tabular}%
  \label{tab:stability}%
\end{table*}%

\par
The Friedman test shows that \algoabbr{} has modest stability among all compared algorithms, showing that the achieved competitive performance in the experiments is in a degree reliable.
\subsection{Component Validation}
\subsubsection{Effectiveness of Individual Archive}
We demonstrate the effectiveness of IA by comparing the performance \algoabbr{} has achieved to the version of \algoabbr{} without IA (using the population to adjust references instead of IA). The results are given in Tab. \ref{tab:disableIA}.
\begin{table*}[htbp]
\setlength{\tabcolsep}{1.5pt}
\scriptsize
  \centering
  \caption{Comparison for \algoabbr{} with IA and \algoabbr{} without IA}
    \begin{tabular}{cccccc}
    \toprule
    \toprule
    \multirow{2}[2]{*}{Problem} & \multirow{2}[2]{*}{$M$} & \multicolumn{2}{c}{\textbf{\algoabbr{}}} & \multicolumn{2}{c}{\textbf{\algoabbr{} w/o IA}} \\
          &       & Mean  & Std   & Mean  & Std \\
    \midrule
    \multirow{3}[1]{*}{F1} & 5     & \cellcolor[rgb]{ .353,  .541,  .776}1.08e-1 & 7.25e-4 & \cellcolor[rgb]{ .973,  .412,  .42}1.16e-1 & 7.84e-4 \\
          & 10    & \cellcolor[rgb]{ .353,  .541,  .776}2.33e-1 & 7.45e-3 & \cellcolor[rgb]{ .973,  .412,  .42}2.50e-1 & 7.94e-3 \\
          & 15    & \cellcolor[rgb]{ .353,  .541,  .776}2.67e-1 & 8.85e-3 & \cellcolor[rgb]{ .973,  .412,  .42}2.85e-1 & 8.96e-3 \\
    \multirow{3}[0]{*}{F2} & 5     & \cellcolor[rgb]{ .353,  .541,  .776}9.65e-2 & 2.85e-3 & \cellcolor[rgb]{ .973,  .412,  .42}1.00e-1 & 2.89e-3 \\
          & 10    & \cellcolor[rgb]{ .973,  .412,  .42}1.53e-1 & 2.99e-3 & \cellcolor[rgb]{ .353,  .541,  .776}1.50e-1 & 3.01e-3 \\
          & 15    & \cellcolor[rgb]{ .353,  .541,  .776}1.64e-1 & 4.35e-3 & \cellcolor[rgb]{ .973,  .412,  .42}1.67e-1 & 4.53e-3 \\
    \multirow{3}[0]{*}{F3} & 5     & \cellcolor[rgb]{ .973,  .412,  .42}6.82e-2 & 1.90e-3 & \cellcolor[rgb]{ .353,  .541,  .776}6.76e-2 & 1.94e-3 \\
          & 10    & \cellcolor[rgb]{ .353,  .541,  .776}8.36e-2 & 2.20e-3 & \cellcolor[rgb]{ .973,  .412,  .42}8.68e-2 & 2.35e-3 \\
          & 15    & \cellcolor[rgb]{ .973,  .412,  .42}9.11e-2 & 1.27e-3 & \cellcolor[rgb]{ .353,  .541,  .776}8.93e-2 & 1.38e-3 \\
    \multirow{3}[0]{*}{F4} & 5     & \cellcolor[rgb]{ .353,  .541,  .776}1.77e0 & 3.25e-2 & \cellcolor[rgb]{ .973,  .412,  .42}1.78e0 & 3.49e-2 \\
          & 10    & \cellcolor[rgb]{ .353,  .541,  .776}7.50e1 & 2.25e0 & \cellcolor[rgb]{ .973,  .412,  .42}7.60e1 & 2.39e0 \\
          & 15    & \cellcolor[rgb]{ .353,  .541,  .776}2.80e3 & 1.23e2 & \cellcolor[rgb]{ .973,  .412,  .42}2.95e3 & 1.28e2 \\
    \multirow{3}[0]{*}{F5} & 5     & \cellcolor[rgb]{ .353,  .541,  .776}1.96e0 & 9.55e-3 & \cellcolor[rgb]{ .973,  .412,  .42}1.98e0 & 9.93e-3 \\
          & 10    & \cellcolor[rgb]{ .973,  .412,  .42}8.82e1 & 1.19e0 & \cellcolor[rgb]{ .353,  .541,  .776}8.82e1 & 1.21e0 \\
          & 15    & \cellcolor[rgb]{ .353,  .541,  .776}2.35e3 & 3.07e2 & \cellcolor[rgb]{ .973,  .412,  .42}2.51e3 & 3.21e2 \\
    \multirow{3}[0]{*}{F6} & 5     & \cellcolor[rgb]{ .353,  .541,  .776}1.82e-3 & 1.43e-4 & \cellcolor[rgb]{ .973,  .412,  .42}1.95e-3 & 1.53e-4 \\
          & 10    & \cellcolor[rgb]{ .353,  .541,  .776}2.75e-3 & 6.25e-4 & \cellcolor[rgb]{ .973,  .412,  .42}2.92e-3 & 6.35e-4 \\
          & 15    & \cellcolor[rgb]{ .353,  .541,  .776}1.03e-2 & 9.25e-3 & \cellcolor[rgb]{ .973,  .412,  .42}1.08e-2 & 1.01e-2 \\
    \multirow{3}[0]{*}{F7} & 5     & \cellcolor[rgb]{ .353,  .541,  .776}2.78e-1 & 3.25e-3 & \cellcolor[rgb]{ .973,  .412,  .42}2.85e-1 & 3.50e-3 \\
          & 10    & \cellcolor[rgb]{ .973,  .412,  .42}8.35e-1 & 1.03e-1 & \cellcolor[rgb]{ .353,  .541,  .776}8.16e-1 & 1.06e-1 \\
          & 15    & \cellcolor[rgb]{ .353,  .541,  .776}1.80e0 & 3.40e-1 & \cellcolor[rgb]{ .973,  .412,  .42}1.81e0 & 3.67e-1 \\
    \multirow{3}[0]{*}{F8} & 5     & \cellcolor[rgb]{ .973,  .412,  .42}9.17e-2 & 4.32e-3 & \cellcolor[rgb]{ .353,  .541,  .776}9.15e-2 & 4.34e-3 \\
          & 10    & \cellcolor[rgb]{ .353,  .541,  .776}1.35e-1 & 4.99e-3 & \cellcolor[rgb]{ .973,  .412,  .42}1.39e-1 & 5.23e-3 \\
          & 15    & \cellcolor[rgb]{ .353,  .541,  .776}1.93e-1 & 1.87e-2 & \cellcolor[rgb]{ .973,  .412,  .42}2.07e-1 & 1.87e-2 \\
    \multirow{3}[0]{*}{F9} & 5     & \cellcolor[rgb]{ .353,  .541,  .776}9.17e-2 & 6.35e-3 & \cellcolor[rgb]{ .973,  .412,  .42}9.71e-2 & 6.87e-3 \\
          & 10    & \cellcolor[rgb]{ .973,  .412,  .42}1.85e-1 & 1.27e-2 & \cellcolor[rgb]{ .353,  .541,  .776}1.81e-1 & 1.37e-2 \\
          & 15    & \cellcolor[rgb]{ .353,  .541,  .776}2.25e-1 & 6.25e-2 & \cellcolor[rgb]{ .973,  .412,  .42}2.40e-1 & 6.56e-2 \\
    \multirow{3}[0]{*}{F10} & 5     & \cellcolor[rgb]{ .353,  .541,  .776}3.87e-1 & 1.45e-2 & \cellcolor[rgb]{ .973,  .412,  .42}3.96e-1 & 1.48e-2 \\
          & 10    & \cellcolor[rgb]{ .353,  .541,  .776}1.02e0 & 2.98e-2 & \cellcolor[rgb]{ .973,  .412,  .42}1.07e0 & 3.04e-2 \\
          & 15    & \cellcolor[rgb]{ .973,  .412,  .42}1.38e0 & 3.91e-2 & \cellcolor[rgb]{ .353,  .541,  .776}1.35e0 & 4.06e-2 \\
    \multirow{3}[0]{*}{F11} & 5     & \cellcolor[rgb]{ .353,  .541,  .776}3.89e-1 & 1.93e-3 & \cellcolor[rgb]{ .973,  .412,  .42}3.96e-1 & 1.96e-3 \\
          & 10    & \cellcolor[rgb]{ .353,  .541,  .776}1.15e0 & 2.26e-2 & \cellcolor[rgb]{ .973,  .412,  .42}1.16e0 & 2.29e-2 \\
          & 15    & \cellcolor[rgb]{ .353,  .541,  .776}1.43e0 & 3.71e-2 & \cellcolor[rgb]{ .973,  .412,  .42}1.52e0 & 3.75e-2 \\
    \multirow{3}[0]{*}{F12} & 5     & \cellcolor[rgb]{ .353,  .541,  .776}9.36e-1 & 3.43e-3 & \cellcolor[rgb]{ .973,  .412,  .42}9.69e-1 & 3.42e-3 \\
          & 10    & \cellcolor[rgb]{ .353,  .541,  .776}4.60e0 & 1.81e-2 & \cellcolor[rgb]{ .973,  .412,  .42}4.70e0 & 1.81e-2 \\
          & 15    & \cellcolor[rgb]{ .353,  .541,  .776}7.73e0 & 8.12e-2 & \cellcolor[rgb]{ .973,  .412,  .42}7.77e0 & 8.34e-2 \\
    \multirow{3}[0]{*}{F13} & 5     & \cellcolor[rgb]{ .353,  .541,  .776}8.83e-2 & 1.25e-2 & \cellcolor[rgb]{ .973,  .412,  .42}9.39e-2 & 1.27e-2 \\
          & 10    & \cellcolor[rgb]{ .973,  .412,  .42}1.12e-1 & 5.25e-2 & \cellcolor[rgb]{ .353,  .541,  .776}1.12e-1 & 5.33e-2 \\
          & 15    & \cellcolor[rgb]{ .973,  .412,  .42}1.43e-1 & 6.35e-2 & \cellcolor[rgb]{ .353,  .541,  .776}1.43e-1 & 6.39e-2 \\
    \multirow{3}[0]{*}{F14} & 5     & \cellcolor[rgb]{ .353,  .541,  .776}3.42e-1 & 2.47e-2 & \cellcolor[rgb]{ .973,  .412,  .42}3.55e-1 & 2.54e-2 \\
          & 10    & \cellcolor[rgb]{ .353,  .541,  .776}5.49e-1 & 7.73e-2 & \cellcolor[rgb]{ .973,  .412,  .42}5.79e-1 & 7.96e-2 \\
          & 15    & \cellcolor[rgb]{ .353,  .541,  .776}6.22e-1 & 1.43e-1 & \cellcolor[rgb]{ .973,  .412,  .42}6.52e-1 & 1.54e-1 \\
    \multirow{3}[1]{*}{F15} & 5     & \cellcolor[rgb]{ .353,  .541,  .776}3.06e-1 & 4.33e-2 & \cellcolor[rgb]{ .973,  .412,  .42}3.26e-1 & 4.48e-2 \\
          & 10    & \cellcolor[rgb]{ .973,  .412,  .42}8.99e-1 & 9.81e-2 & \cellcolor[rgb]{ .353,  .541,  .776}8.98e-1 & 9.82e-2 \\
          & 15    & \cellcolor[rgb]{ .353,  .541,  .776}1.02e0 & 1.35e-1 & \cellcolor[rgb]{ .973,  .412,  .42}1.04e0 & 1.46e-1 \\
    \midrule
    \multicolumn{2}{c}{$t$-test} & \multicolumn{2}{c}{~} & \multicolumn{2}{c}{16/22/7} \\
    \bottomrule
    \bottomrule
    \end{tabular}%
  \label{tab:disableIA}%
\end{table*}%

It can be observed that generally \algoabbr{} with IA performs better than \algoabbr{} without IA. IA can bring changes for performance on different types of problems: reason for performing better on problems with full FOS, IA prevents the adaptation engine from mistakenly initiating due to the stochastic perturbations of the evolutionary process; On problems with partial FOS, apart from the first reason, the additional is that IA keeps all the individuals that could activate the reference vectors, whereas in the population, due to the limit of the size of population, useful cluster centers are often lost.
\subsubsection{Effectiveness of Adaptation}
To validate the effectiveness of RA, we feed the adaptation engine with different types of simulated tracked PFs to check if the adaptation engine can effectively adapt the reference vectors to the distribution of the current PF. To visually verify the effectiveness, we have crafted $4$ different simulated test cases, where respectively, the current PF is segmented into parts by the imaginary infeasible parts of the corresponding objective spaces. The results are given in Fig. \ref{fig:markov}.
\begin{figure*}
\centering
\subfloat[$N = 24, \#active = 23$]{
\captionsetup{justification = centering}
\includegraphics[width=0.45\textwidth]{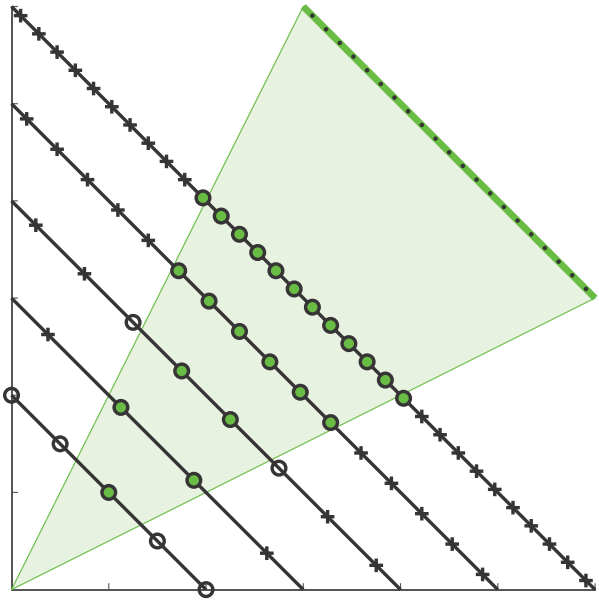}}
\hfill
\subfloat[$N = 24, \#active = 22$]{
\captionsetup{justification = centering}
\includegraphics[width=0.45\textwidth]{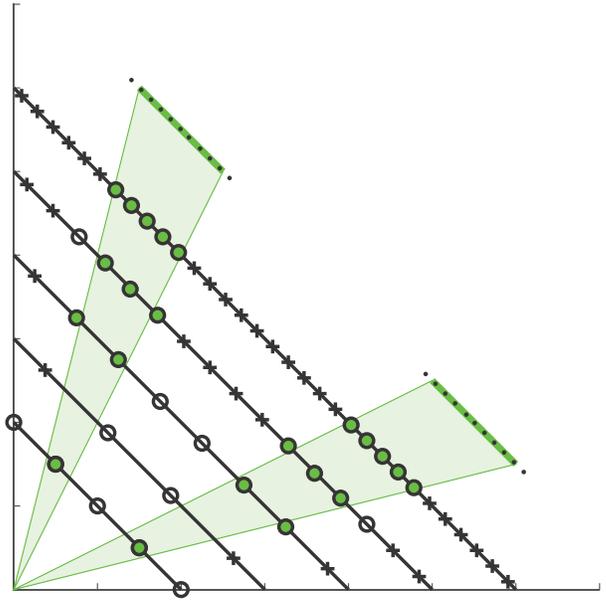}}

\subfloat[$N = 24, \#active = 19$]{
\captionsetup{justification = centering}
\includegraphics[width=0.45\textwidth]{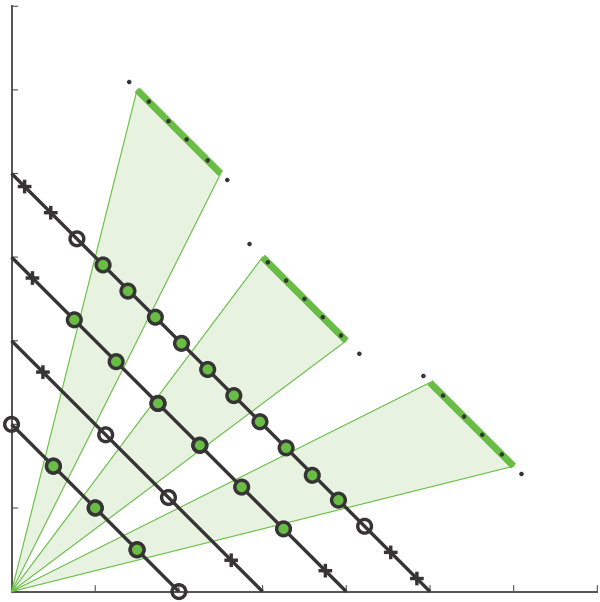}}
\hfill
\subfloat[$N = 24, \#active = 24$]{
\captionsetup{justification = centering}
\includegraphics[width=0.45\textwidth]{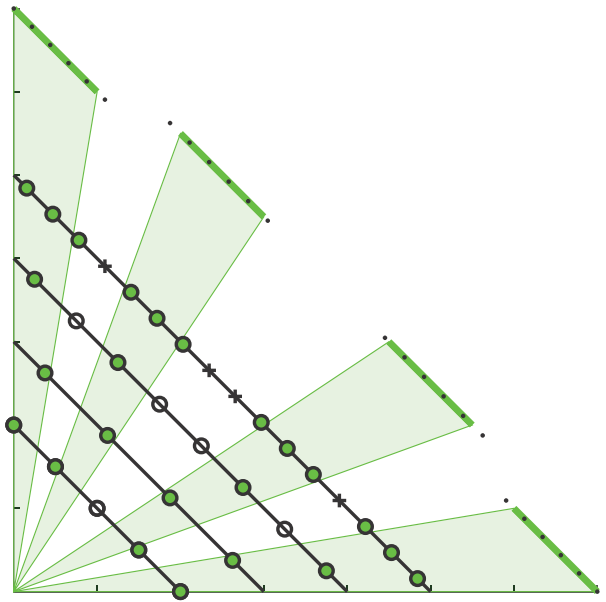}}
\caption{Results of adaptation on the simulated test cases: we simulate irregular and fractal PFs in $2D$ objective spaces to test the effectiveness of the adaptation. The dots on near the current PF shows the intersection points of the adapted reference vectors and the current PF. The $4$ plotted simulation results are persuasive: adaptation can deal with the complicated fractal PFs well, with maximum inaccuracy of $5/24 = 20.83\%$ in test case 3; But there is no need to worry, since such inaccuracy will decrease with the increment of $N$. The current $N = 24$ is too small, which is only chosen for visualization purposes.}
\label{fig:markov}
\end{figure*}
\par
Moreover, to test the Markov properties (the adaptation is expected to be Markovian since past adaptations should not influence the future), we shuffle the $4$ simulated test cases to $24$ possible permutations and output the similarity of the adaptation results (percentage of the identically enabled points in the RA) on these sequences in Tab. \ref{tab:markov}.
\begin{table*}[htbp]
\setlength{\tabcolsep}{3pt}
\scriptsize
\centering
\caption{Intra-sequence Similarity for Markov Property of Adaptation}
    \begin{tabular}{cccc}
    \toprule
    \toprule
    \multicolumn{4}{c}{\textbf{Similarity (\%)}} \\
    \midrule
    a     & b     & c     & d \\
    \midrule
    100\% & 100\% & 100\% & 100\% \\
    \bottomrule
    \bottomrule
    \multicolumn{4}{m{0.16\textwidth}}{\tiny Execution of adaptation on the permuted sequences is one by one, and continue upon the previous state of RA.}\\
    \end{tabular}%
  \label{tab:markov}%
\end{table*}%
\par
The test demonstrate perfect Markovian behavior for the adaptation mechanism, but these simulated test cases cannot represent general test cases. However, the results demonstrate some level of reliable stability of the adaptation engine.
\section{Conclusions}\label{section:conclusion}
Aiming to address the problems caused by infeasible parts of MaOPs, this paper proposed a reference vector based MaOEA with interacting components. The algorithm \algoabbr{} tracks the current PF and uses such information to adapt the reference vectors which guides the evolution. The novelty in this paper mostly lies in the designed interaction scheme among the components and the method of adapting reference vectors. The effectiveness of the work has been validated thoroughly in the comparative experimental studies. It can be concluded that the proposed algorithm \algoabbr{} is effective in dealing with the MaOPs and particularly good at solving the problems with partially feasible objective spaces.
\section*{Acknowledgments}
This work is supported by Mila (L'Institut qu\'eb\'ecois d'intelligence artificielle), the Scientific Computing Laboratory and Reasoning and Learning Laboratory, McGill University, the National Natural Science Foundation of China (61572104, 61103146, 61425002, 61751203), the Fundamental Research Funds for the Central Universities (DUT17JC04), and the Project of the Key Laboratory of Symbolic Computation and Knowledge Engineering of Ministry of Education, Jilin University (93K172017K03).

\section*{References}
\bibliography{mybibfile}
\end{document}